\journal{Artificial Intelligence}
\begin{document}

\begin{frontmatter}


\cortext[cor1]{Corresponding author at: Department of Mechanical Engineering, College of Design and Engineering, National University of Singapore, 21 Lower Kent Ridge Rd, Singapore}

\title{Social Behavior as a Key to Learning-based Multi-Agent Pathfinding Dilemmas}

\author[1]{Chengyang He}
\author[1]{Tanishq Duhan}
\author[1]{Parth Tulsyan}
\author[1]{Patrick Kim}
\author[1]{Guillaume Sartoretti\corref{cor1}}
\ead{guillaume.sartoretti@nus.edu.sg}

\address[1]{{Department of Mechanical Engineering, College of Design and Engineering, National University of Singapore},
            {21 Lower Kent Ridge Rd}, 
            {117575}, 
            {Singapore}}


\begin{abstract}

The Multi-agent Path Finding (MAPF) problem involves finding collision-free paths for a team of agents in a known, static environment, with important applications in warehouse automation, logistics, or last-mile delivery. To meet the needs of these large-scale applications, current learning-based methods often deploy the same fully trained, decentralized network to all agents to improve scalability. However, such parameter sharing typically results in homogeneous behaviors among agents, which may prevent agents from breaking ties around symmetric conflict (e.g., bottlenecks) and might lead to live-/deadlocks. In this paper, we propose SYLPH, a novel learning-based MAPF framework aimed to mitigate the adverse effects of homogeneity by allowing agents to learn and dynamically select different social behaviors (akin to individual, dynamic roles), without affecting the scalability offered by parameter sharing. Specifically, SYLPH agents learn to select their Social Value Orientation (SVO) given the situation at hand, quantifying their own level of selfishness/altruism, as well as an SVO-conditioned MAPF policy dictating their movement actions. To these ends, each agent first determines the most influential other agent in the system by predicting future conflicts/interactions with other agents. Each agent selects its own SVO towards that agent, and trains its decentralized MAPF policy to enact this SVO until another agent becomes more influential. To further allow agents to consider each others' social preferences, each agent gets access to the SVO value of their neighbors. As a result of this hierarchical decision-making and exchange of social preferences, SYLPH endows agents with the ability to reason about the MAPF task through more latent spaces and nuanced contexts, leading to varied responses that can help break ties around symmetric conflicts. Our comparative experiments show that SYLPH achieves state-of-the-art performance, surpassing other learning-based MAPF planners in random, room-like, and maze-like maps, while our ablation studies demonstrate the advantages of each component in SYLPH. We finally experimentally validate our trained policies on hardware in three types of maps, showing how SYLPH allows agents to find high-quality paths under real-life conditions. Our code and videos are available at: \href{http://marmotlab.github.io/mapf_sylph}{marmotlab.github.io/mapf\_sylph}.

\end{abstract}



\begin{keyword}
Multi-agent Pathfinding; Parameter Sharing; Symmetry Dilemmas; Social Value Orientation

\end{keyword}

\end{frontmatter}



\section{Introduction}
\label{Intro}

Multi-Agent Path Finding (MAPF) involves devising collision-free paths for multiple agents within a known and static space, guiding them from their current positions to their respective goals~\cite{stern2019multi}.
MAPF is commonly used in warehouse automation~\cite{li2021lifelong, wang2020mobile}, air traffic control~\cite{polydorou2021learning}, autonomous driving~\cite{li2023intersection}, and video game AI~\cite{ma2017feasibility}.
While the objectives for coordinating the behaviors of all agents may differ among these scenarios, such as minimizing makespan, ensuring safety margins, or optimizing resource usage, a common feature is the need to deploy and coordinate always-increasing numbers of agents.

\begin{figure}[h]
\centering
\includegraphics[width=5in]{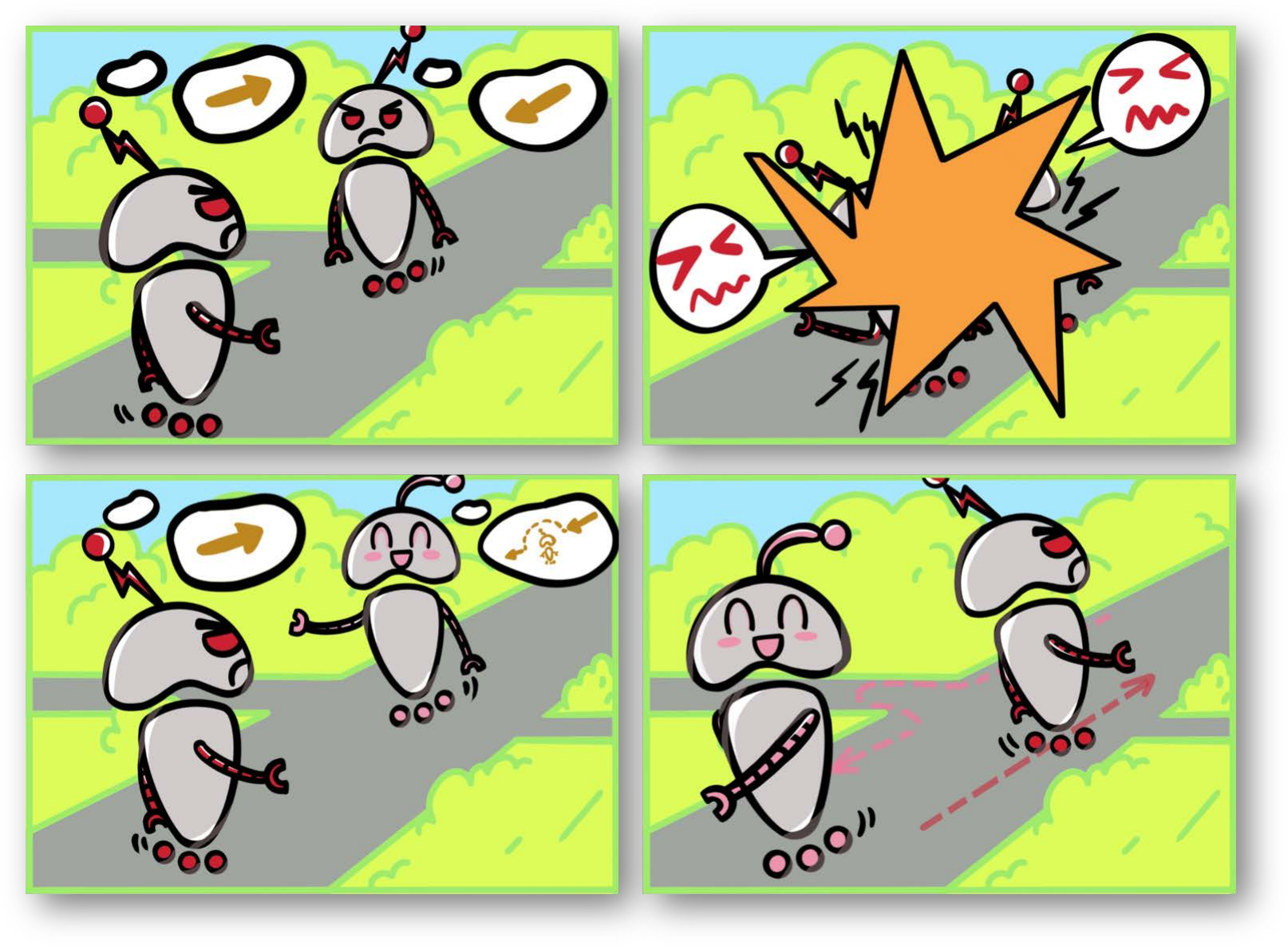}
\vspace{-0.3cm}
\caption{A simple example illustrates the difference between a completely selfish team and a team with diverse social roles. 
The two figures above show that when facing a symmetric challenge, a team of selfish agents falls into a social dilemma. 
In contrast, agents with different SVOs can more easily achieve cooperation by breaking the homogeneity of their behavior patterns. 
They benefit from a combination of individualism and pro-socialism within the team, as shown in the two figures below.}
\label{comic_summary}
\vspace{-0.4cm}
\end{figure}

To meet the needs of such large-scale applications, the community has increasingly turned to learning-based methods~\cite{sartoretti2019primal,lin2023sacha,yan2024neural}  that leverage deep reinforcement learning (DRL) and advanced neural network architectures. 
These decentralized, reactive MAPF planners offer improved scalability, addressing the limitations of traditional algorithms that struggle with the curse of dimensionality as team sizes increase~\cite{ferner2013odrm, ma2019searching}, though they often result in suboptimal solutions.
To enhance scalability, learning-based MAPF algorithms usually adopt the Independent Policy Learning (IPL) paradigm~\cite{tan1993multi}, where agents consider/observe each other as dynamic features of the environment, greatly reducing the complexity of learning and increasing the robustness of policies.
The learning-based MAPF methods benefit from good scalability also partly due to decentralized decision-making driven by \textit{parameter sharing}~\cite{damani2021primal,wang2020mobile,guan2022ab}, in which experiences collected by multiple agents are combined to train a single neural network during the training process, and the fully trained network is then deployed to all agents for execution.
However, independent learning with parameter sharing tends to homogeneize the behavior of the agents, resulting in all agents exhibiting similar social preferences, often individualistic due to their need to maximize individual rewards and complete individual tasks. 
In highly structured scenarios, such as bottlenecks and narrow corridors, homogeneous behaviors may cause agents to fall into symmetric social dilemmas~\cite{li2021pairwise}, which can lead to live- or deadlocks, as illustrated in Fig.~\ref{comic_summary}.
These \textit{social dilemmas} arise from symmetries in the environment / agents' states and are exacerbated by conflicts between individual interests, where none of the agents involved can obtain higher rewards unless compromises are made by one of them. 
Another limitation of this training approach is that the independent nature of learning results in agents that cannot easily encourage/exhibit coordinated maneuvers, which limits their performance in dense scenarios. 
A viable solution for agents to address these two issues is by learning social behaviors, specifically through reasoning about and balancing short-term self-interests with long-term team benefits.
Learning social behavior equips agents with varying levels of prosociality, breaking homogeneity and by more tightly coupling agents directly in reward space. 
This approach help with coordinated maneuvers, essential for resolving social dilemmas in large-scale MAPF instances.

\begin{figure}[t]
\centering
\includegraphics[width=5.4in]{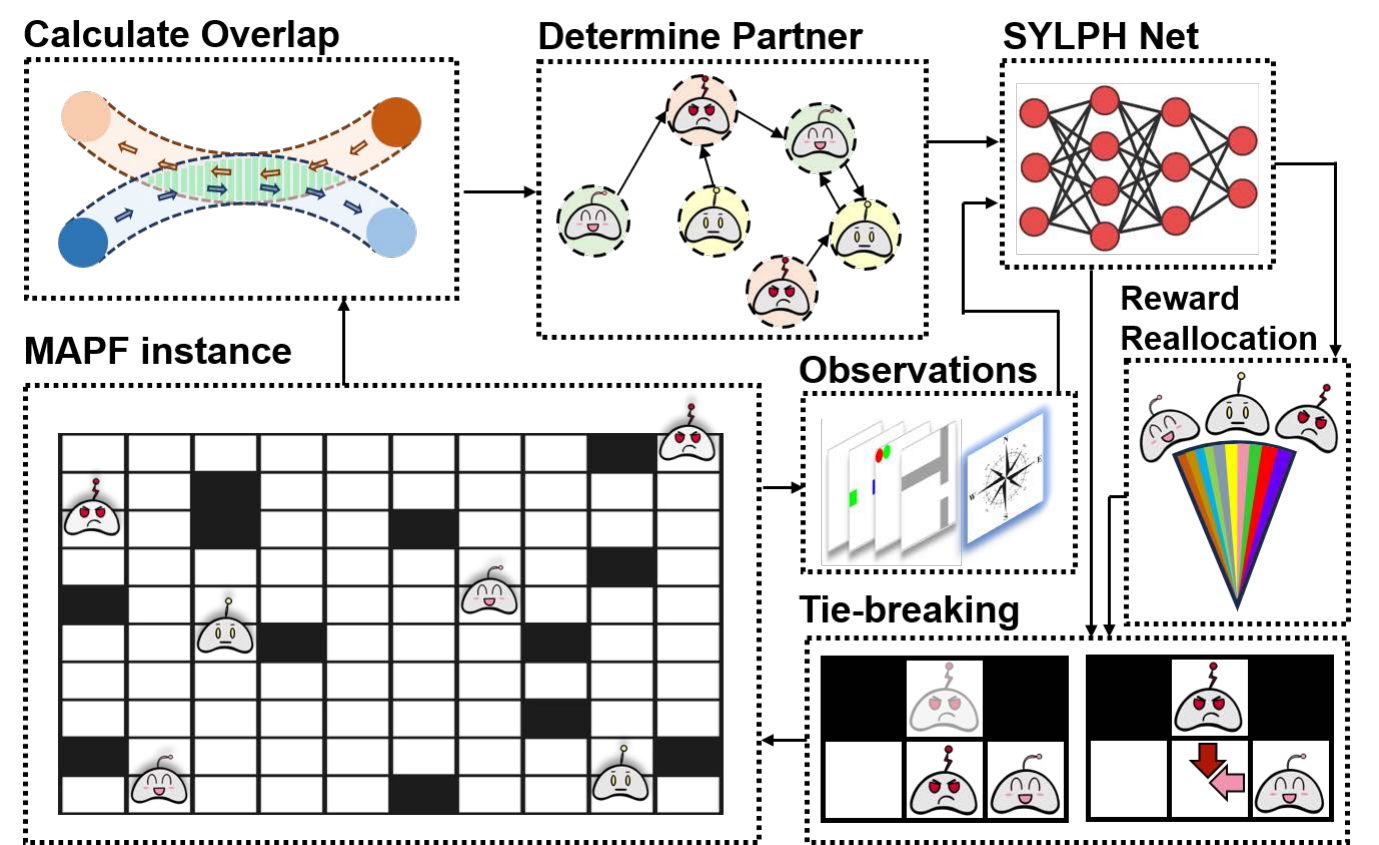}
\vspace{-0.3cm}
\caption{The key components and overall architecture of SYLPH. By introducing social preference into the MAPF framework as a temporary extension skill, the agent is equipped with social behavior to better cope with social dilemmas such as symmetry problems and blocking problems.}
\label{graphic_abstract}
\vspace{-0.4cm}
\end{figure}

To these ends, this paper develops a novel learning-based hierarchical MAPF framework, SYLPH (SociallY-aware multi-agent PatHfinding), which helps agents mitigate the adverse effects of homogeneity on scalability by allowing them to learn dynamic roles while following the parameter-sharing paradigm. 
Leveraging a tool from social psychology, we propose incorporating the notion of Social Value Orientation (SVO) as a latent social preference space on top of the agents' action space, to develop a hierarchical decision-making architecture (as shown in Fig.~\ref{graphic_abstract}).
The lower-level action space consists of control commands (four cardinal directions and staying idle), allowing agents to interact with the environment by choosing different discrete movements.
In contrast, the latent social preference space functions as an upper-level latent space, serving as conditions for low-level SVO-based conditional policies that indirectly influence these cardinal direction choices. 
In our framework, SVO acts as a social metric that characterizes an individual's prosocial level and weighs their reward against the rewards of others~\cite{schwarting2019social}.
To this end, each agent needs to predict the potential interaction/conflict degree with other agents based on the current configuration and determine the most influential one in the system (referred as \textit{partner}) about its own pathfinding task. 
The agent's SVO will act on this partner to promote social behavior formation between them until another agent becomes more influential. 
Additionally, to comprehensively consider the social preferences of other agents, SYLPH agents can access the SVOs of those with whom they have relationships in a multi-hop manner for communication learning.
By knowing other team members' SVOs, agents can infer each other's potential behavior and make decisions that enhance team performance.
Due to the larger latent space provided our hierarchical decision framework, along with the contextual information from agents' exchange of social preferences and historical SVO memory, we show that SYLPH agents can overcome the decision-making challenges posed by symmetric environments more easily.
Moreover, the coordinated reward mechanisms introduced by SVO incentivize agents to make decisions that benefit the entire team rather than just maximizing their own rewards. 
This reduces the individualistic tendencies fostered by parameter sharing and independent training, fostering coordinated maneuvers.

Furthermore, the upper-level latent spaces in this paper have clear meanings, explicitly representing the social preferences of agents and providing interpretable insights into their decision-making processes.
A benefit of explicit SVO representation is that it improves the interpretability and predictability of agent behaviors. 
By characterizing agents based on their SVO (i.e., whether they are selfish, neutral, or pro-social), we can explicitly model their preferences and priorities in decision-making processes.
This differentiation allows us to better understand and anticipate the strategies and interactions of various agents, leading to more transparent and explainable outcomes.
To validate the advantages mentioned above, we conduct a comprehensive evaluation of the proposed framework.
We test MAPF configurations with various team sizes on three different types of maps: random maps, room-like maps, and maze maps.
Additionally, we also present an ablation study to verify the effectiveness of each component of our framework.
Finally, through real-robot experiments on three different types of maps, we demonstrate that the paths provided by SYLPH are executable under real-life conditions.


\section{Prior Work}


\subsection{Traditional MAPF Methods}

The research community's exploration of MAPF originated with traditional methods.
Traditional MAPF techniques are categorized by optimality into three types: optimal, bounded suboptimal, and unbounded suboptimal~\cite{gao2023review}.
Optimal methods, by definition, enable multiple agents to achieve their goals with minimal overall cost.
Theoretically, optimal methods are typically complete; they should offer a solution as long as a solution exists for the MAPF instance.
The two most influential optimal planners are M*~\cite{wagner2011m} and CBS~\cite{sharon2015conflict}, and many methods in the community are extensions and improvements of these two methods~\cite{ferner2013odrm,chan2021ecbs,ma2019searching}. 
When M* does not detect conflicts between agents, the state space only expands by one cell at each timestep, following the optimal choices of all agents. 
For conflicting agents, M* evaluates combinations of their possible actions while attempting to balance these with the optimal actions of other agents. 
It does so by searching through the joint space of agents around collisions, and at worst can fall back onto exhaustive search for the whole team.
CBS adopts a two-layer approach, where the upper layer uses conflict-based binary tree search and the lower layer uses the optimal single-agent planner A* to provide each agent with an optimal path based on conflict constraints.

Bounded suboptimal MAPF planners are designed to handle the increased computational load encountered by optimal solvers on expansive maps and with large-scale teams.
These planners strike a balance between solution quality and computational efficiency by introducing a suboptimality factor, which can be tightened to make the planner approximate the optimal solution.
For instance, inflated M*~\cite{pearl1984heuristics,wagner2015subdimensional} achieves a relaxation of M* by altering the heuristic function of the A* algorithm.
ECBS implements both levels of CBS as a focal search~\cite{barer2014suboptimal}, reducing the number of collisions and accelerating the CBS search process.
Building on ECBS, an advanced planner called EECBS~\cite{li2021eecbs} has been developed, which combines explicit estimation search at the high level with focal search at the low level.
Thanks to its integration of multiple improvement techniques, EECBS maintains good performance even with large-scale teams.

To further pursue solver effectiveness rather than optimality, the community gradually began to study unbounded suboptimal solvers. 
This type of method's focus shifts from achieving the optimal solution to enhancing success rates and solution speeds without strict adherence to optimality, which is particularly useful in highly complex scenarios.
The current state-of-the-art MAPF solvers also belong to this method\footnote{We mark the state-of-the-art algorithms with \textcolor{red}{$\star$}.}. 
MAPF-LNS~\cite{li2021anytime} designs a new framework by combining small-scale high-quality solvers and large-scale low-quality solvers. 
First, any efficient MAPF algorithm is used to find the initial solution for the instance. 
From there, a Large Neighborhood Search (LNS)~\cite{shaw1998using} is used to re-plan the subgroup of agents to improve the quality of the solution.
In MAPF-LNS2\textcolor{red}{$^\star$}~\cite{li2022mapf}, an opposite idea is adopted. 
Many collision-allowed paths are generated at first, and subgroups of conflicting paths are then continuously selected and updated within a limited time until they are collision-free.
PIBT is a traditional one-step update method based on priority~\cite{okumura2022priority}.
It generates a single step of the path for each agent at every timestep until the problem is solved or a preset maximum number of steps is reached. 
LaCAM\textcolor{red}{$^\star$}~\cite{okumura2023lacam}, building upon PIBT, is a more advanced two-layer MAPF planner. 
At the higher level, it searches a sequence of configurations, where a configuration is a tuple of locations for all agents. 
The generation of these configurations is a low-level task. 
PIBT, as a low-level planner, can generate configurations that satisfy constraints extremely quickly.

The progression from optimal to unbounded suboptimal solvers in the MAPF community reflects a shift towards more flexible and scalable solutions that are capable of managing the increasing complexity of practical applications.
This trend underscores a growing emphasis on algorithms that can deliver reasonable solutions within acceptable time frames, especially under the constraints of large-scale environments and agent populations.


\subsection{Learning-based MAPF Methods}

Deep learning has been a promising tool to solve one-shot MAPF problems ever since PRIMAL was introduced~\cite{sartoretti2019primal}.
Recent works have shown that agents can achieve better cooperation through communication, as it allows access to richer information.
Therefore, some algorithms tried using either Graph Neural Networks~\cite{li2020graph,li2021message,ma2021learning} or Transformers~\cite{wang2023scrimp} to incite communication learning.
Alternatively, a communication approach predicts agents' priorities using traditional algorithms and integrates this priority into an ad-hoc routing protocol for prioritized communication learning~\cite{li2022multi}.
These communication learning methodologies leverage distinct information aggregation mechanisms to facilitate information exchange among team agents, enriching the decision-making process and enhancing team cooperation. 
However, the exchanged messages lack explicit meaning since they are products of learned encoding.
This renders the communication process both uninterpretable and unpredictable~\cite{chen2022nagphormer,kim2020communication}. 
Such opacity can hinder the model's ability to manage the quality and effectively aggregate the communicated messages, producing an excess of redundant messages as a result. 
These messages will fail to significantly increase the performance, and only impose additional computational loads.
While some efforts have been made to filter communication recipients~\cite{ding2020learning,ma2021learning}, the inherent black-box nature of communication learning process precludes a clear understanding of the possible phenomena.

Another possible way to significantly enhance the performance of MAPF algorithms is to enable agents to access global information, a statement that has been supported by many existing algorithms.
Considering that most MAPF application scenarios occur in known environments, using only the information accessed by the agents' local field of view (FoV) is a waste of resources. 
However, to maximize the advantage of the full environment information, it is crucial to encode and represent global information properly. 
The academic community devised various ways to accomplish this.
Some approaches use the A* algorithm to compute individual paths for agents and integrate these into the agents' observation~\cite{liu2020mapper} and reward structures~\cite{wang2020mobile}.
The approach described in~\cite{liu2020mapper} mandates that agents strictly adhere to the A* path and imposes penalties for any deviation, even if it is minor. 
Conversely,~\cite{wang2020mobile} significantly softens these constraints, permitting agents to stray from the A* path with the provision that they can rejoin at any future point to receive the cumulative rewards they previously earned.
Alternative to end-goal expert path guidance, a more flexible and deployable heuristic map representations emerged with the use of Breadth-First Search (BFS)~\cite{ma2021distributed}. 
Rather than strictly enforcing adherence to A* paths, these heuristic maps offer agents a range of actions on all unoccupied cells, guiding them towards their goal in a flexible manner. 
This provision of global information proves advantageous in densely populated environments, as it affords the agent a broader spectrum of choices.
A global map encoding based on a graph transformer has been used in our recent work~\cite{he2023alpha}, preserving global information to the greatest extent and being more expressive. 
This method furnishes the agent with insights at the global graph level without compelling adherence to any intermediate nodes, thus affording the agent increased flexibility and capabilities for long-horizon planning.

Learning-based planners are sometimes prone to deadlocks or livelocks due to unforeseen circumstances. 
Implementing post-processing techniques to assist agents in overcoming such cases can also boost algorithm performance. 
Upon detecting a conflict within the agent's learned policy,~\cite{virmani2021subdimensional} utilizes M*~\cite{wagner2015subdimensional} for re-planning within the joint configuration space.
However, this approach is highly resource-intensive, as M* calculates complete paths for all agents, with frequent calls leading to considerable computational burden.
Our previous research~\cite{wang2023scrimp} introduced a tie-breaking strategy grounded in state-value, offering a more efficient alternative by enabling local conflict resolution. 
This post-processing operations are carried out upon conflict detection to prevent collisions. 
Nonetheless, this technology can result in a dramatic surge in computational demands in densely populated scenarios.
While such interventions can enhance the overall performance of the algorithm, they reveal a limitation at the model level: the trained model itself lacks long-term foresight to avoid conflict.
To guarantee scalability, the aforementioned current methods deploy a unified network across all agents. 
These approaches, however, engender homogeneity and self-interest among team members. 
This precipitates social dilemmas in specific scenarios such as symmetric cases. 

To address this challenge, this work introduces a new SVO-based social behavior learning mechanism, titled SYLPH. 
By eliminating the need for configuring separate networks for different agents and introducing diversity within the team, SYLPH effectively mitigate the adverse effects of homogeneity caused by parameter sharing on scalability.
Furthermore, our tie-breaking mechanism is based on agents' SVO, enabling our model to enhance the conflict-aware foresight of the generated policies without necessitating extra computational resources for post-processing.


\subsection{Social Preference Usage}

In social psychology, Social Value Orientation (SVO) is a common metric for encapsulating an individual's social preferences and propensities~\cite{mcclintock1989social}, which serves as an indicator of a person's inclination to cooperate with fellow team members.
Specifically, SVO quantifies the extent to which an individual prioritizes personal versus team benefits.
It can be represented by the angle $\phi$~\cite{murphy2011measuring}, as shown in Fig.~\ref{graphic_abstract}.
The individual is said to be more egoistic if the $\phi$ value is closer to 0, and more altruistic if the $\phi$ value is closer to 90.
In robotics, Schwarting et al. pioneered the application of this concept in the field of autonomous driving~\cite{schwarting2019social}.
They employed this metric as an intermediary variable, enhancing the accuracy of predictive models for human-driven vehicle trajectories.
Subsequent research in autonomous driving has expanded upon this concept.
Examples include investigations into social communication among multiple vehicles in the presence of adversaries~\cite{zhang2023zero}, enabling agents to autonomously adapt their SVO preferences~\cite{dai2023socially}.
Furthermore, the field of video games has also seen significant research on SVO~\cite{mckee2020social,madhushani2023heterogeneous}. 
These studies show that in scenarios involving social dilemmas, the diversity of SVOs within the population is beneficial.

Previous SVO-based approaches typically set pre-allocated, immutable SVOs to agents.
This arrangement is reasonable for tasks with a clear division of labor (e.g., Cleanup and Harvest~\cite{jaques2019social}) or shared objectives (e.g., StarCraft II and Google Research Football~\cite{li2021celebrating}), where the necessity for diverse roles to collaboratively contribute to the team's objective is evident. 
In such scenarios, roles are often predetermined based on prior knowledge and maintained throughout the task. 
The absence of any role type can cause the entire system to malfunction, making the problem unsolvable.
In MAPF tasks, however, an agent may need to adopt various behavioral patterns depending on the situation in order to achieve more effective solutions.
In other words, the allocation of roles is not individual-oriented but situation-oriented, and thus should remain dynamic throughout the task.
Furthermore, while there is a common overarching goal, each agent also pursues individual objectives in MAPF. 
In order to balance self-interest and social benefit, there is a need for agents to learn a flexible SVO that can adapt to environmental changes. 
Drawing inspiration from skill learning~\cite{eysenbach2018diversity,sharma2019dynamics,he2020skill}, we propose viewing an agent's SVO as a temporally extended skill that depends on the previous timestep's SVO and the SVOs of other agents.
The agent can then dynamically select an SVO based on observations in varying contexts, thereby influencing its lower-level action space decision-making.

In this paper, SYLPH adopts a flexible approach where agents can adaptively learn real-time SVO policies in response to their current environment, thereby moving away from the rigid and predefined allocation of social roles.


\section{Problem Statement}


\subsection{MAPF Problem Formulation}

The Multi-Agent Path Finding (MAPF) problem exhibits numerous variants, including classic one-shot MAPF~\cite{li2022mapf}, MAPF with kinematic constraints~\cite{ma2019lifelong}, lifelong MAPF~\cite{skrynnik2023learn,damani2021primal}, prioritized MAPF~\cite{chandra2023socialmapf}, multi-agent pickup and delivery~\cite{okumura2022priority}, and etc. 
This paper focuses on the classic one-shot MAPF problem. 
Characteristically, the classic MAPF instance is set up on an undirected simple graph $\mathcal{G}=(\mathcal{V},\mathcal{E})$, encompassing a set of agents $\mathcal{A}=\{a_1,a_2\cdots a_n\}$, a pre-settled start positions $\mathcal{S}=\{s_1, s_2, \cdots, s_n\}\in\mathcal{V}$, and a designated set of destinations/goals $\mathcal{D}=\{d_1,d_2\cdots d_n\}\in\mathcal{V}$, where $n$ denotes the number of agents.
In this context, time $t\in\mathbb{N}$ is treated as discrete, allowing an agent $a_i^t$ to either move to an adjacent vertex $a_i^{t+1}=v_j\in\mathcal{N}_\mathcal{G}(v_i)$ or remain stationary at its current vertex $a_i^{t+1}=v_i$ within a single timestep. 
The aim of the MAPF task is to generate collision-free paths for all agents from their respective initial occupied vertices $s_i$ to their goal vertices $d_i$ within the minimum possible number of timesteps. 
We consider a set of paths $\{\tau_i\}$ for agents $i=1,2,\cdots,n$, where each path $\tau_i$ is a sequence of vertices that agent $a_i$ traverses over time $t=0,1,\cdots,T$, with $T$ being the maximum timestep considered.
All the paths must satisfy both \textbf{Condition 1:} $\forall i,j\in \{1,2,\cdots,n\},i\neq j,\forall t\in\{0,1,\cdots,T\},\tau_i(t)\neq\tau_j(t)$ (no two agents occupy the same vertex at any time), \textbf{Condition 2:} $\forall i,j\in \{1,2,\cdots,n\},i\neq j,\forall t\in\{0,1,\cdots,T-1\},(\tau_i(t)\neq \tau_j(t+1))\vee (\tau_i(t+1)\neq \tau_j(t))$ (no two agents swap vertices between consecutive timesteps), and \textbf{Condition 3:} $\forall i\in \{1,2,\cdots,n\},((\tau_i(0)=s_i)\wedge((\tau_i(T)=d_i))$ (all agents start from the pre-settled position and reach their goal at the end of the task), to make sure the paths are collision-free and effective.


\subsection{Environment Type}

This paper evaluates the proposed framework's effectiveness across three distinct map types. 
Firstly, we consider random maps, a common choice among learning-based planners for both training and testing due to the large variance in its structure and complexity.
This variability challenges the model's generalization capabilities, as the unpredictability in obstacle placement necessitates the learning of a diverse array of policies. 
Secondly, we assess performance on room-like maps, which are more structured than random maps and feature elements such as doorways, narrow corridors, and rooms. 
These structured obstacles compel agents to develop long-horizon planning capabilities to effectively find the path. 
Furthermore, room-like maps often contain cut vertices that can amplify minor errors into significant team-wide setbacks, highlighting the importance of planner stability. 
Lastly, given that this paper is dedicated to solving social dilemmas in MAPF problems, we also conduct tests on maze maps. 
This map type is characterized by numerous long corridors, dead-ends, and various edge cases. 
Such features pose significant challenges for existing learning-based MAPF planners due to issues such as corridor symmetry and target symmetry~\cite{li2021pairwise}.
To effectively address this type of problem, agents require a high level of coordination.
SYLPH addresses these challenges and achieves better performance by introducing SVO for the agent, effectively breaking the symmetry.


\section{Social Behavior Learning}
\label{social_behavior_learning}

In this section, we delve into the integration of the Social Value Orientation (SVO) concept within the Multi-Agent Path Finding (MAPF) problem, introducing a novel MAPF framework named SYLPH that incorporates social preferences.
By enabling agents to learn and adopt SVO-based social preferences, we introduce diversity into the multi-agent system.
This equips agents with the capability to navigate and resolve social dilemmas, such as the various symmetries frequently encountered in MAPF challenges.
For example, coordination of agents with opposite goals in narrow passages and livelock caused by symmetric goal positions in open areas.
The diversity ensures that agents will exhibit distinct behaviors based on their individual SVOs even when placed in identical environments, leading to varied decision-making outcomes.

We explore this integration from two primary perspectives (as shown in Fig.~\ref{graphic_abstract}): the generation of hierarchical policies and the influence of upper level SVO policies on the formulation of action-oriented policies. 
This exploration aims to highlight how the incorporation of SVO not only enriches the policy depth available to agents but also enhances their problem-solving efficacy within the MAPF context, enabling a more nuanced and cooperative pathfinding.


\subsection{Partner Selection}
\label{partner_selection}

In this work, introducing SVO into the MAPF process is aimed at mitigating potential social dilemmas. 
A crucial initial question is identifying the origins of these dilemmas from the perspective of an agent, specifically determining with whom to collaborate to effectively solve them.
Since SVO is used to balance an agent's self-interest with collective interests, representing the collective interests becomes our first task.
An intuitive approach is to average the rewards of other members in the team or other members within a certain observation range as the collective interests.
However, there are two significant disadvantages to doing so. 
First, this mixed collective interest representation of multiple agents is biased from the perspective of the current agent. 
Because it may mix information from agents that are irrelevant to the social dilemma encountered by the current agent, which will confuse the agent. 
Second, it difficult for the neural network to establish relationships between the agent and other fellow agents under such a collective interest representation. 
Compressing too much information into an average reward causes the neural network to lose significant information, making it challenging to reason about the original relationships between agents.
Therefore, in this paper we let the agent choose a \textit{partner} for SVO determination.
The term \textit{partner} refers to this other specific agent involved in the dyadic team with the primary (ego) agent.
Additionally, it is also worth mentioning that the agent-partner pair is not bi-directional, which means an agent's partner may choose the third agent as its partner.
This approach draws parallel insights from some recent autonomous driving research~\cite{dai2023socially, yang2018mean}, which suggests that an autonomous vehicle's behavior is predominantly influenced by the vehicle in its immediate vicinity rather than others within the broader FoV. 
These studies have led us to realize that focusing on a single partner can sometimes be more advantageous than considering multiple agents, because it allows neural networks to more easily reason about the relationships between different team members.

\begin{algorithm}[t]
	\caption{Selection of Temporary Partner.}
	\label{algo_1}
	\KwIn{The grid obstacle map: $\mathcal{G}$; the set of all agents' current positions: $\mathcal{A}$; the set of all agents' goals: $\mathcal{D}$.}
	\KwOut{All agent's partner: $\mathcal{P}\in\mathbb{N}_{+}^{n\times 1}$; The potential overlap of optimal paths among all agents: $\mathcal{O}_L\in\mathbb{R}^{n\times n}$.}  
	\BlankLine
        Initialize $\mathcal{P}$ as $\emptyset$, and $\mathcal{O}_L$ as $\mathbf{0}^{n\times n}$;
        \hfill \textcolor{blue}{$\triangleright$ \texttt{single optimal paths}}
        
	Compute A* paths $\mathcal{T}_{A*}\leftarrow\{\mathcal{G},\mathcal{A},\mathcal{D}\}$ for all agents;

    \For{$\forall v_t^i \in \mathcal{T}_{A*}$}{
        Determine the direction of the agent: $\delta_{v}^i\leftarrow\{v_t^i, v_{t+1}^i\}$;
        \hfill \textcolor{blue}{$\triangleright$ \texttt{flow of the optimal paths}}
    }

    \For{$\forall v \in \mathcal{T}_{A*}$}{
        \If{$\exists v^i_{t_i}, v^j_{t_j}\equiv v~\textnormal{\textbf{and}}~i\neq j~\textnormal{\textbf{and}}~\delta_{v}^i\neq \delta_{v}^j$}{
            $\mathcal{O}_L[i,j] = \mathcal{O}_L[i,j] + \gamma_{ol}^{t_i} + \gamma_{ol}^{t_j}$;
            \hfill \textcolor{blue}{$\triangleright$ \texttt{overlap calculation}}\\
            $\mathcal{O}_L[j,i] = \mathcal{O}_L[j,i] + \gamma_{ol}^{t_i} + \gamma_{ol}^{t_j}$; 
        }
    }

    \ForEach{row $\in\mathcal{O}_L$ }{
        \eIf{$row = 0$}{
            $\mathcal{P}[\texttt{Index}(row)]=\texttt{Index}(row)$;
            \hfill \textcolor{blue}{$\triangleright$ \texttt{temporary partner}}
        }{
            $\mathcal{P}[\texttt{Index}(row)]=\mathop{\arg\max}\limits_{i}row[i]$;
        }
    }
    
\end{algorithm}

In order to find the most impactful partner among all agents, we formalize the method of selecting partners in Algorithm~\ref{algo_1}. 
Specifically, the procedure for selecting a temporary partner in the context of MAPF with an emphasis on SVO can be detailed in four steps:
\begin{itemize}
    \item Calculating Single Agent Optimal Paths [Line 1-2]: 
    Initially, for each agent within the system, an optimal path is computed using the $A*$ algorithm based on the current environmental configuration $(\mathcal{G},\mathcal{A},\mathcal{D})$. 
    Each agent's path is delineated as a sequence of vertex coordinates:
      \begin{equation}
      \begin{aligned}
      &\mathcal{T}_{A*} = \{\tau_{a*}^1, \cdots, \tau_{a*}^n\}\\
      &\tau_{a*}^i = \{v_0^i,v_1^i,\cdots,v_T^i\}~~~~~i=1,2,\cdots,n.
      \end{aligned}
      \end{equation}
    \item Determining the Flow of These Individual Paths [Line 3-5]:
    Based on the optimal path computed, the flow of the path can be determined from the sequence of vertex coordinates. 
    In other words, the agent's orientation $\delta_v^i$ at any cell can be obtained by comparing the vertex coordinates between the current $v_t^i$ and the next timestep $v_{t+1}^i$. 
    \item Computing the Overlap of Optimal Path Flows Between Agents [Line 6-11]: With the flow of individual paths established, the next step involves assessing the overlap between these flows.
    A key idea is that if the flows of agents are in the same direction at a cell, i.e. $\delta_v^i=\delta_v^j$, then we consider no potential conflicts at this cell between agents and therefore consider this cell's overlap as $0$.
    However, if the agents are not moving in the same direction at a cell (as shown in Fig.~\ref{flow_overlap}), indicating potential crossing points or interactions, the overlaps should be assessed according to Line [8-9].
    The impact of such overlaps on an agent's decision-making weakens as the distance between the overlapping cell's position and the agent's current location increases. 
    The decay of overlap impact based on distance introduces the concept of a decay factor $\gamma_{ol}\in (0, 1]$, which is a predefined hyper-parameter.
    This factor adjusts the significance of distant overlaps on an agent's current choices, with a higher value indicating a more far-sighted agent that considers distant overlaps more significantly, while a lower value indicating an agent more focused on immediate or nearby overlaps.
    Mathematically, the overlap between two agents' paths is quantified as the weighted sum of all overlapping cells within their path flows: 
    \begin{equation}
    \begin{aligned}
        \mathcal{O}_L[i,j] = \mathcal{O}_L[i,j] = \sum_{v\in(\tau_{a*}^i\land\tau_{a*}^j)} \gamma_{ol}^{\texttt{Index}_{\tau_{a*}^i}(v)} + \gamma_{ol}^{\texttt{Index}_{\tau_{a*}^j}(v)}
    \end{aligned}
    \end{equation}
    The weight of each overlapping cell is adjusted by the decay factor, relative to its distance from the agent's current position. 
    This method provides a nuanced approach to evaluating potential path conflicts, allowing agents to prioritize their immediate navigation decisions while still accounting for future interactions. 
    \item Finding a Temporary Partner [Line 12-17]: 
    Based on the overlap analysis, agents are then paired or assigned a temporary partner.
    If an agent's optimal path flow does not exhibit any overlap with the flows of other agents, the agent defaults to selecting itself as its partner. 
    This scenario indicates that the agent can proceed without the need to adjust its path in response to potential conflicts with others, allowing for path finding towards its goal without external coordination.
    Conversely, if there is an overlap between an agent's path flow and that of one or more other agents, the agent will choose as its temporary partner the agent with which it has the largest weighted overlap.           
\end{itemize}

\begin{figure}[h]
\centering
\includegraphics[width=4.5in]{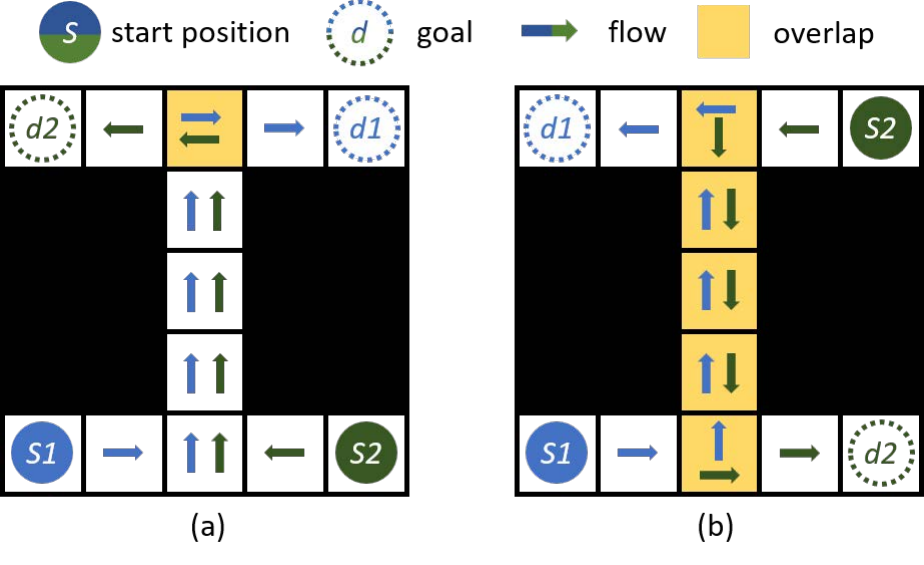}
\vspace{-0.3cm}
\caption{Overlap in optimal path flows between agents, caused by their varied starting and goal position configurations within the same map. 
In scenario (a), two agents traverse a narrow corridor moving in the same direction, which results in minimal conflict. 
Conversely, scenario (b) involves agents needing to navigate in opposite directions within the same space, significantly heightening the potential for conflict due to the direct opposition in their intended paths.
According to Algorithm~\ref{algo_1}, the calculated overlap in scenario (a) is markedly less than that in scenario (b). 
This distinction aligns well with intuitive expectations and the specific objectives of managing social dilemmas within multi-agent path finding. 
The larger overlap in scenario (b) suggests a higher degree of conflict and necessitates more critical intervention or strategy adjustment to avoid collision or deadlock, highlighting a situation of greater social distress.}
\label{flow_overlap}
\vspace{-0.4cm}
\end{figure}

After establishing the method above for selecting a temporary partner, the next step involves delineating the criteria for selecting and switching partners. 
We propose that updates to partner selection may not have to occur per timestep to ensure system stability and consistency in agent interactions.
The formal process for updating partners is outlined in Algorithm~\ref{algo_2}. 
Firstly, an agent selects a temporary partner at the start and treats this agent as its initial fixed partner. 
This fixed partnership remains unchanged until the overlap in the optimal path flow between the agent and its fixed partner is eliminated, indicating that any potential social dilemma or conflict with this particular partner has been resolved. 
By adopting this method, updates regarding the agents' partner occur asynchronously. 
Such a mechanism not only reflects a realistic and practical approach to managing interactions within a dynamic multi-agent environment but also significantly contributes to the overall stability of the system. 
This strategy allows for the focused resolution of conflicts with specific partners before considering a shift to new partner dynamics, ensuring that changes in partnerships are meaningful and based on resolved interactions rather than fluctuating frequently without resolving underlying conflicts.

\begin{algorithm}[t]
	\caption{Fixed Partner Update Criteria.}
	\label{algo_2}
	\KwIn{All agent's temporary partner: $\mathcal{P}\in\mathbb{N}_{+}^{n\times 1}$; The potential overlap of optimal paths among all agents: $\mathcal{O}_L\in\mathbb{R}^{n\times n}$.}
	\KwOut{All agents' fixed partners: $\mathcal{P}_{fix}\in\mathbb{N}_{+}^{n\times 1}$.}  
	\BlankLine

        Initialize $\mathcal{P}_{fix}$ according to the temporary partner $\mathcal{P}$ provided by Algorithm~\ref{algo_1} at the beginning of the episode;

        \For{$\forall\textnormal{agent}~i$}{
            \eIf{$\mathcal{O}_L[i,\mathcal{P}_{fix}[i]]=0$}{
                $\textnormal{\textbf{Update}}~\mathcal{P}_{fix}[i]~\textnormal{\textbf{as}}~\mathcal{P}[i]$;
                \hfill \textcolor{blue}{$\triangleright$ \texttt{update fixed partner}}
            }
            {
                $\textnormal{\textbf{Keep} the partner of}~i~\textnormal{unchanged}$.
                \hfill \textcolor{blue}{$\triangleright$ \texttt{keep fixed partner}}
            }
        }
        
\end{algorithm}

The algorithm introduced in this subsection quantifies the degree of conflict between agents by measuring the extent of their optimal path overlaps, thereby offering a formal method to express social dilemmas within a multi-agent environment. 
By precisely delineating the magnitude of these social dilemmas, the approach facilitates the identification of the agent that presents the greatest potential for conflict. 
This approach to clarifying social dilemmas enables the SVO mechanism to be applied more effectively. 
By focusing on resolving the most significant potential conflicts, agents can better navigate their environment, avoid collisions, and reach their goals in a more efficient manner. 


\subsection{SVO Generation}
\label{svo_policy_genration}

\subsubsection{Socially-aware reward}
Inspired by \textit{skill learning}~\cite{he2020skill,lowe2017multi}, our approach moves away from the way of assigning fixed, immutable SVO to agents.
Instead, we conceptualize SVO as a temporally extended skill - a dynamic attribute that does not reside with any single agent but rather emerges from the patterns of coordinated behavior among agents.
This reconceptualization recognizes that SVO is not a one-size-fits-all attribute.
This dynamic, learnable SVO enables agents to adjust their social preferences based on the current context and interactions with other agents.
Agents with dynamic SVO can change their behavior to meet the needs of the team, helping in avoiding live-/deadlocks and reduces the chances of prolonged conflicts, especially in highly structured maps.

In this light, the selection of an SVO is treated as an additional policy $\pi_\phi(z|z')$, akin to skill selection in skill learning frameworks.
The training of this SVO policy is conducted in tandem with the training of the agents' action policies, creating a synergistic relationship where both policies are interconnected through the mechanism of SVO. 
This linkage depends on partner selection as outlined in Section~\ref{partner_selection}, where the choice of partner directly influences the dynamics of the SVO policy.
Specifically, in our context, any agent and its partner must reach their respective goals due to task requirements. 
Since learning-based planners rely on parameter sharing to improve scalability, a self-interested reward structure is necessary to motivate the agent to pursue its own goal.
However, when an agent insists on following its optimal path, its corresponding partner may have to incur an additional external penalty to manage the arising conflict, embodying the zero-sum nature of their interaction. 
Therefore, we constraint that the agent's SVO, denoted by $Z$, should be between egoistic ($Z\approx 0^{\circ}$) and prosocial ($Z\rightarrow 45^{\circ}$). 
When $Z$ satisfies this restriction and all external rewards are non-positive (consistent with our previous research), $R_i^a$ is monotonically non-increasing within the domain of $[0^{\circ}, 45^{\circ}]$ \footnote{Proof can be found in~\ref{proof_1}}. 
It implies that in cases devoid of conflict, agents are predisposed towards egoistic behavior, thereby maximizing their own long-term cumulative rewards $R_i^a$. 
Conversely, in situations where conflicts exist, there is a tendency for agents to adopt a more prosocial demeanor to maximize $R_i^s$. 
Such a shift facilitates the achievement of superior long-term rewards for the group formed by the agent and its partner, highlighting the utility of SVO in mediating self-sacrifice for collective gain.
Based on the above discussion, the reward structures of the SVO policy and action policy are as follows:
\begin{equation}\label{reward_redistri}
\begin{aligned}
    R_i^s &\coloneqq (R_i^{ex} + R_p^{ex})~/~\rho,~~i,p \in\mathcal{A} \\
    R_i^a &\coloneqq \cos{Z}\cdot R_i^{ex} + \sin{Z} \cdot R_p^{ex}, ~~Z\thicksim \pi_\phi(z|z').
\end{aligned}
\end{equation}
where, $R_i^{ex}$ denotes the external reward for agent $i$, adhering to the reward configuration established in our prior research~\cite{sartoretti2019primal,wang2023scrimp,he2023alpha}. 
The structure of the external reward is encapsulated in the outcomes of the agent's interactions with the environment, guiding the agent towards its goals effectively.
$R_i^{s}$ and $R_i^{a}$ represent the rewards associated with the SVO policy and the action policy, respectively. 
The hyper-parameter $\rho$ plays a crucial role in calibrating the influence of the SVO policy reward on the agent's learning process, allowing for fine-tuning to achieve desired behaviors. 
$R_i^{s}$ is conceptualized as the aggregate of the external rewards received by both the agent and its selected partner. 
This reward structure is rooted in the intention to encourage agents to adopt SVO choices that enhance the collective well-being of the small group (consisting of the agent and its partner). 
Through this tightly coupled mechanism, the framework incentivizes the agent to learn and select SVOs that are not only beneficial to itself, but also advantageous to its cooperative interactions, thereby facilitating coordinated maneuvers.
$R_i^{a}$, on the other hand, is based on the definition of SVO. This ensures that the agent's actions are coherent with its selected SVO ($Z$). 
The reward received through $R_i^{a}$ motivates the agent to execute actions that are in harmony with its SVO, fostering a congruent and integrated approach to decision-making and behavior. By simultaneously maximizing cumulative $R_i^{s}$ and $R_i^{a}$, the model can derive both a upper-level SVO policy and a lower-level action policy: 
\begin{equation}
\begin{aligned}
    \pi^*_\phi(z_i|z'_i) &= \arg\max_\phi\mathbb{E}_{\tau_i\thicksim\pi_\theta}[\sum_{t=0}^T\gamma^tR_i^{s}]; \\
    \pi^*_\theta(a_i|o_i,z'_i) &= \arg\max_\theta\mathbb{E}_{\tau_i\thicksim\pi_\theta,Z_i\thicksim\pi_\phi}[\sum_{t=0}^T\gamma^tR_i^{a}];
\end{aligned}
\end{equation}
where $\gamma\in (0,1]$ is the discount factor and $T$ is the time horizon.

\subsubsection{Policy optimization}

Proximal Policy Optimization (PPO) stands out as a highly favored framework in the domain of reinforcement learning (RL), celebrated for its stability, straightforward hyper-parameter tuning, and impressive performance. 
In our work, we use a variation of PPO to support the training of SYLPH agents while distinguishing this approach from vanilla PPO by integrating an additional layer: a higher-level, socially-aware policy. 
This novel layer complements the action policy, equipping agents with social behavior in addition to moving towards their goals. 
We call this novel variation as Social-aware Multi Policy PPO (SMP3O).

In the context of SMP3O, where the framework needs to optimize two policies (the SVO policy $\pi_\phi(z_i|z_i')$ and the action policy $\pi_\theta(a_i|o_i,z_i')$) simultaneously, it becomes important to consider the losses associated with each policy independently. 
We formalize the losses for these policies as $\mathcal{L}_{\pi_\phi}$ and $\mathcal{L}_{\pi_\theta}$:
\begin{equation}
\begin{aligned}
    \mathcal{L}_{\pi_\phi} &= \mathbb{E}_t[\min(r_{svo}^t(\phi)\hat{A}_{action}^t,clip(r_{svo}^t(\phi), 1-\epsilon, 1+\epsilon)\hat{A}_{action}^t)]\\
    \mathcal{L}_{\pi_\theta} &= \mathbb{E}_t[\min(r_{action}^t(\theta)\hat{A}_{svo}^t,clip(r_{action}^t(\theta), 1-\epsilon, 1+\epsilon)\hat{A}_{svo}^t)]
\end{aligned}
\end{equation}
where $r_{svo}^t(\phi) = \frac{\pi_\phi(z_i^t|z_i^{t-1})}{\pi_{\phi_{old}}(z_i^t|z_i^{t-1})}$ and $r_{action}^t(\theta) = \frac{\pi_\theta(a_i^t|o_i^t,z_i^t)}{\pi_{\theta_{old}}(a_i^t|o_i^t,z_i^t)}$ are the probability ratios of the action under the new policy $\pi_{\phi/\theta}$ over the old policy $\pi_{\phi_{old}/\theta_{old}}$.
$\epsilon$ is a hyper-parameter that defines the clipping range to avoid excessively large policy updates.
$\hat{A}^t_{svo}$ and $\hat{A}^t_{action}$ are the advantage functions, which estimate how much better a particular SVO/action is if it was taken over the average.
Unlike the policy loss in PPO, which is calculated by directly associating the advantage of an action with the likelihood ratio of that action under the current policy versus the old policy, SMP3O introduces a novel approach.
The core insight behind integrating the SVO policy with the action policy in the SMP3O framework lies in leveraging the hierarchical nature of these policies. 
Specifically, we provide a cross-utilizing advantages mechanism, where each policy derives indirect benefits from the learning signals of the others.
As the upper level, the SVO policy plays a pivotal role in redistributing action rewards, thereby ensuring that actions are aligned with the social preferences of the agent (as illustrated in Eq~\ref{reward_redistri}). 

Therefore, when calculating the action policy loss $\mathcal{L}_{\pi_\theta}$, $\hat{A}^t_{svo}$ should be combined with the action policy $\pi_\theta(a_i|o_i,z_i')$. 
This mechanism is particularly beneficial in scenarios characterized by social dilemmas. 
In such situations, even though the SVO policy may dictate a course of action that aligns with long-term group benefits or conflict resolution, it might disadvantageous to the agent in the short term. 
SMP3O encourages the action policy to transcend this myopia, adhering the directives of the SVO policy, which can be expressed as:
\begin{equation}
\begin{aligned}
    \triangle\theta\propto\nabla_\theta\mathbb{E}[\hat{A}_{svo}\cdot\log\pi_\theta(a_i|o_i,z_i')]
\end{aligned}
\end{equation}
This approach fosters behaviors that potentially sacrifice immediate individual gains and instead contribute to the collective well-being of the agent pair, emphasizing group rewards over individual rewards.
When calculating the SVO policy loss $\mathcal{L}_{\pi_\phi}$, incorporating the action advantage $\hat{A}_{action}$ facilitates the formation of a feedback loop. 
Because of reward reallocation and action loss, agent action will be consistent with the SVO decision.
Under this premise, the aforementioned feedback enables the SVO policy to assess the quality of executed actions, guiding the agent towards decisions that concurrently optimize both SVO alignment and action efficacy.
Formally:
\begin{equation}
\begin{aligned}
    \triangle\phi\propto\nabla_\theta\mathbb{E}[\hat{A}_{action}\cdot\log\pi_\phi(z_i|z_i')]
\end{aligned}
\end{equation}
This bidirectional reinforcement between the SVO and action policies create a synergistic learning environment. 
It not only aligns individual actions with broader social goals but also ensures that the SVO policy is refined based on the outcomes of these actions, establishing a coherent and mutually beneficial relationship between social behaviors and action executions.

To enhance the stability of an agent's SVO, we rely on supervised learning. 
Drawing from our prior experiences with valid and blocking losses, the formulation for the SVO stability-enhancing loss, $\mathcal{L}_{stab}$, can be expressed as follows:
\begin{equation}
\begin{aligned}
    \mathcal{L}_{stab} = \mathbb{E}[z_{i,exp}\log(\pi_\phi(z_i|z_i'))+(1-z_{i,exp})\log(1-\pi_\phi(z_i|z_i'))]
\end{aligned}
\end{equation}
Here, $\mathcal{L}_{stab}$ is fundamentally the cross entropy between the SVO policy output by the neural network and the expected SVO probability distribution. 
For shaping our desired SVO distribution, we introduce an under-relaxation factor, $\alpha$, within the range [0,1], to modulate the adjustment of the SVO towards the expected value. 
The expected SVO, $z_{exp}$, is determined by blending the current SVO, $z$, with previous SVO, $z'$, using $\alpha$:
\begin{equation}
\begin{aligned}
    z_{i,exp} = \alpha z_i' + (1-\alpha) z_i
\end{aligned}
\end{equation}
The calculation of $\alpha$ is designed to reflect the degree of overlap between the agent and its partner:
\begin{equation}
\begin{aligned}
    \alpha = \min(\mathcal{O}_L[i, p], clip(\mathcal{O}_L[i, p], 0, \kappa)) / \kappa
\end{aligned}
\end{equation}
$\kappa$ sets a boundary condition for the overlap magnitude; when the actual overlap between agents falls below $\kappa$, the agent is permitted to modify the SVO with greater latitude.
The rationale behind this formulation is to encourage SVO policy stability especially under conditions of significant overlap, thereby mitigating potential volatility in the agent's social behavior. 
As the overlap diminishes, indicating reduced immediate conflict or competition for resources, the model permits more flexible adjustments to the SVO, aiming to foster higher degrees of cooperation and coordination.

\subsection{Enhancing Policy with SVO}
\label{Enhancing_policy}
 
\begin{figure}[h]
	\centering
	\vspace{-0.3cm}
	\subfigtopskip=2pt 
	\subfigbottomskip=2pt 
	\subfigcapskip=-5pt
	\subfigure[]{
		\label{tie_breaking_a}
		\includegraphics[width=0.2\linewidth]{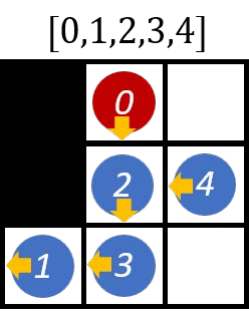}}
	\quad 
	\subfigure[]{
		\label{tie_breaking_b}
		\includegraphics[width=0.2\linewidth]{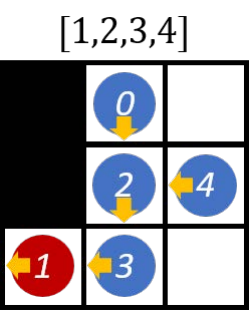}}
    \quad
	\subfigure[]{
		\label{tie_breaking_c}
		\includegraphics[width=0.2\linewidth]{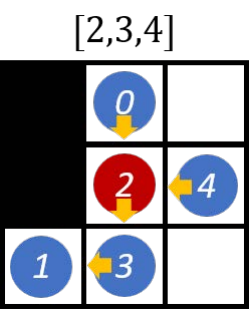}}
	\quad
	\subfigure[]{
		\label{tie_breaking_d}
		\includegraphics[width=0.2\linewidth]{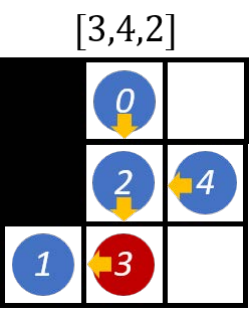}}
  
    \subfigure[]{
		\label{tie_breaking_e}
		\includegraphics[width=0.2\linewidth]{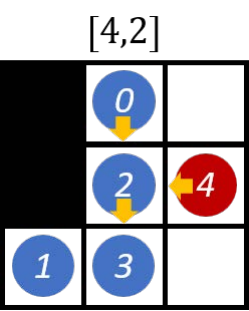}}
	\quad 
	\subfigure[]{
		\label{tie_breaking_f}
		\includegraphics[width=0.2\linewidth]{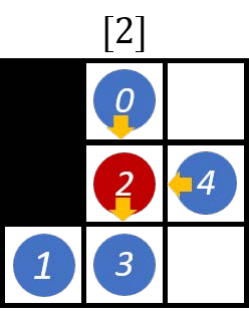}}
    \quad
	\subfigure[]{
		\label{tie_breaking_g}
		\includegraphics[width=0.2\linewidth]{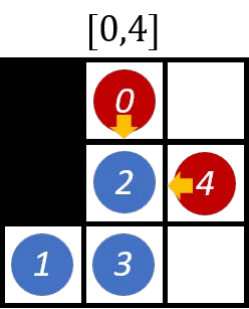}}
	\quad
	\subfigure[]{
		\label{tie_breaking_h}
		\includegraphics[width=0.2\linewidth]{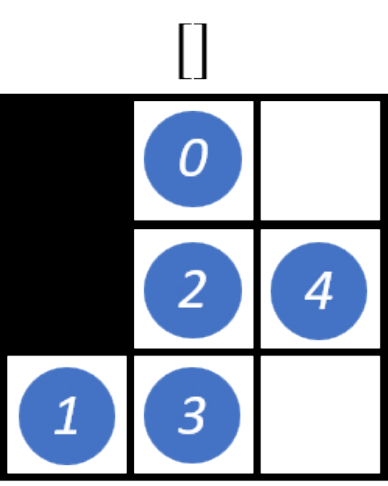}}
	\caption{The SVO-based tie-breaking mechanism example.}
	\label{tie_breaking}
\end{figure}

Our previous work, PRIMAL~\cite{sartoretti2019primal}, established a definitive set of valid actions and aimed to train agents to recognize meaningful/executable actions through supervised learning.
However, we observed that PRIMAL agents, motivated by the pursuit of higher individual rewards, would still sometimes select invalid actions.
This tendency adversely impacted the overall system's performance. 
To mitigate such issues,~\cite{wang2023scrimp} introduced a tie-breaking strategy based on state value, essentially serving as a rule-based local post-processing. 
This strategy empowered agents to evaluate all possible scenarios in the next timestep, enabling the selection of a more reasonable action.
This rule-based method proved computationally intensive especially in densely populated environments, escalating the computational burden significantly. 
In response to this challenge, this paper proposes an innovative approach that integrates SVO into the conflict resolution process, redefining reward distribution amidst conflicts. 
This strategy departs from post-processing by embedding a tie-breaking mechanism directly within the model, thereby enhancing agent's policy without incurring additional computational costs.
This RL-based tie-breaking mechanism, unlike its predecessor, does not merely rectify decisions post-facto but rather informs the decision-making process intrinsically, ensuring that choices made are both feasible and aligned with the collective objective. 

\begin{algorithm}[t]
	\caption{SVO-based Tie-breaking Method.}
	\label{algo_3}
	\KwIn{All agent's action set $\mathcal{A}\in\mathbb{N}^n$ and SVO set $\mathcal{Z}\in\mathbb{N}^n$ output by the neural network.}
	\KwOut{Adjusted conflict-free action set $\mathcal{A}'\in\mathbb{N}^n$; Socially-aware adjusted rewards $R\in\mathbb{R}^n$.}
	\BlankLine

         \textbf{Sort} agents in descending order of SVOs $\mathcal{Z}$ to obtain the consideration chain $\mathcal{C}$;
         \hfill \textcolor{blue}{$\triangleright$ \texttt{initialize consideration chain}}
         
         \While{$len(\mathcal{C}) < 0$}{
         $i \gets$ $\mathcal{C}$.pop() \hfill \textcolor{blue}{$\triangleright$ \texttt{check action status}}
         
         \uIf{$\mathcal{A}[i]\in$ Invalid Action}{
         $\mathcal{A}'[i] = 0$ (stay idle); $R[i] = -2$ (Collision Penalty);
         \hfill \textcolor{blue}{$\triangleright$ \texttt{invalid}}
         \\
         \uIf{$0\in$ Restricted Action}{
         \For{$\forall j~\mathbf{causing}$ \textit{Restricted Action}}{
         $\mathcal{C}$.append(j) $\mathbf{If}~j\notin\mathcal{C}$ 
         }}
         }
         \uElseIf{$\mathcal{A}[i]\in$ Restricted Action}
         {
         \For{$\forall j~\mathbf{causing}$ \textit{Restricted Action}}{
         $\mathcal{A'}[i]=0$ (stay idle); 
         \hfill \textcolor{blue}{$\triangleright$ \texttt{restricted}}
         \\
         $R[i]=-2~\mathbf{If}~\mathcal{Z}[i]>\mathcal{Z}[j]$;
         $R[j]=-2~\mathbf{If}~\mathcal{Z}[j]>\mathcal{Z}[i]$;
         }
         $\mathbf{Repeat}$ Line 6 to 9
         }
         \Else{
         $\mathcal{A}'[i]=\mathcal{A}[i]$; $R[i]=$ External Reward;
         \hfill \textcolor{blue}{$\triangleright$ \texttt{normal}}
         }
         }
\end{algorithm}

The tie-breaking mechanism in our paper outlines an approach to conflict resolution among agents based on their social preferences.
We first need to clarify the definitions of \textit{Invalid Actions} and \textit{Restricted Actions}.
Invalid Actions are actions that would cause an agent to collide with a static object or boundary within the environment, rendering the action unfeasible.
Restricted Actions are actions would result in dynamic collisions between agents, depending on their intended movements during the same timestep.
To process action validation and conflict resolution, agents are sorted based on their SVOs at the begin, from the most prosocial to the most self-interested [Line 1]. 
This ranking dictates the priority with which each agent's actions are validated and adjusted in the face of potential conflicts.
Then, agents are checked in descending order of prosociality for invalid actions [Line 2-9].
If an invalid action is detected, the agent's action is changed to 'stay idle' to avoid static collision [Line 5].
If 'staying idle' still results in a restricted action (potential collision with another agent), this action is flagged, and the situation needs further resolution [Line 6-9].
The next operation of Algorithm~\ref{algo_3} is resolving restricted actions.
For agents causing dynamic collisions, their actions are set to 'stay idle' [Line 12].
If waiting does not resolve the collision, actions of conflicting agents are reassessed and adjusted at the end of the consideration chain.
The chain continues until all agents have valid actions [Line 15].
Only agents with higher SVOs (more prosocial) receive penalties for collisions [Line 13], reflecting their role in facilitating smoother group dynamics by yielding. 
Self-interested agents are less likely to be penalized, preserving their direct routes or actions unless absolutely necessary.
This tie-breaking mechanism effectively uses SVO as a prioritization tool in conflict resolution, where more cooperative agents are more inclined to compromise for the greater good. 
The algorithm potentially make these systems more acceptable and understandable in human-centric environments.

Fig.~\ref{tie_breaking} illustrates an example scenario. 
The numbers in circles indicate the initial order in which agents are checked based on their SVOs, arranged in descending order. 
The top array represents the current chain of consideration for resolving the collision, initially set as $[0, 1, 2, 3, 4]$.
In Fig.~\ref{tie_breaking_a}, agent 0’s action is assessed first. 
It appears valid and remains unchanged, assuming that if agent 2 can execute its current plan, there will be no conflict. 
Next, in Fig.~\ref{tie_breaking_b}, agent 1 is found to be performing an invalid action and is therefore set to stay idle. 
However, this results in a restricted action affecting agent 3 (Fig.~\ref{tie_breaking_d}), prompting its addition to the list for evaluation.
Before this, Fig.~\ref{tie_breaking_c} shows that agent 2 is performing a valid action.
When agent 3's action is evaluated, it is changed to 'stay idle,' necessitating a reevaluation of agent 2 after agent 4’s actions are considered (as shown in Fig.~\ref{tie_breaking_e}). 
Upon reconsideration, agent 2’s action is deemed restricted (Fig.~\ref{tie_breaking_f}), and it is set to stay idle. 
This change prompts the addition of agents 0 and 4 back into the consideration chain, as shown in Fig.~\ref{tie_breaking_g}.
The deadlock is ultimately resolved when all agents are set to 'stay idle.' 
In this scenario, agents 1, 2, and 0 receive collision penalties due to conflicts with agents 3, 4, and 2 respectively.


\section{Attention-based Network}

The network architecture described encompasses three distinct input components as shown in Fig.~\ref{net_overview}. 
The first channel is grid-level observation, essentially the detailed environment within the agent's field of view (FoV), centered around itself. 
The second channel is a vector that directs the agent towards its goal. 
The third channel comprises the SVOs of the agent and its partners, which are selected according to Section~\ref{partner_selection}. 
For processing the grid-level observations within the FoV and the vector pointing towards the goal, we utilize the efficient encoder as outlined in our prior research~\cite{sartoretti2019primal}. 
Specifically, the grid-wise observation undergoes processing through two 2D convolutional blocks~\cite{simonyan2014very}, each containing two convolutional layers followed by a MaxPooling operation. 
The encoding of the goal vector is accomplished using a fully connected layer.
Meanwhile, the SVOs of the ego agent and its partners are encoded using a communication block that is based on a Graph Transformer~\cite{chen2022nagphormer}.
Subsequently, the embeddings generated from these three inputs are concatenated and fed through a residual block before being decoded by a semantic transformer.
The outputs are then transformed into the agent's action and SVO policies, as well as blocking and value, through different fully connected layers and activation functions. 
This section will delve into the detailed implementation of the communication block and semantic transformer block. 

\begin{figure}[t]
\centering
\includegraphics[width=5.4in]{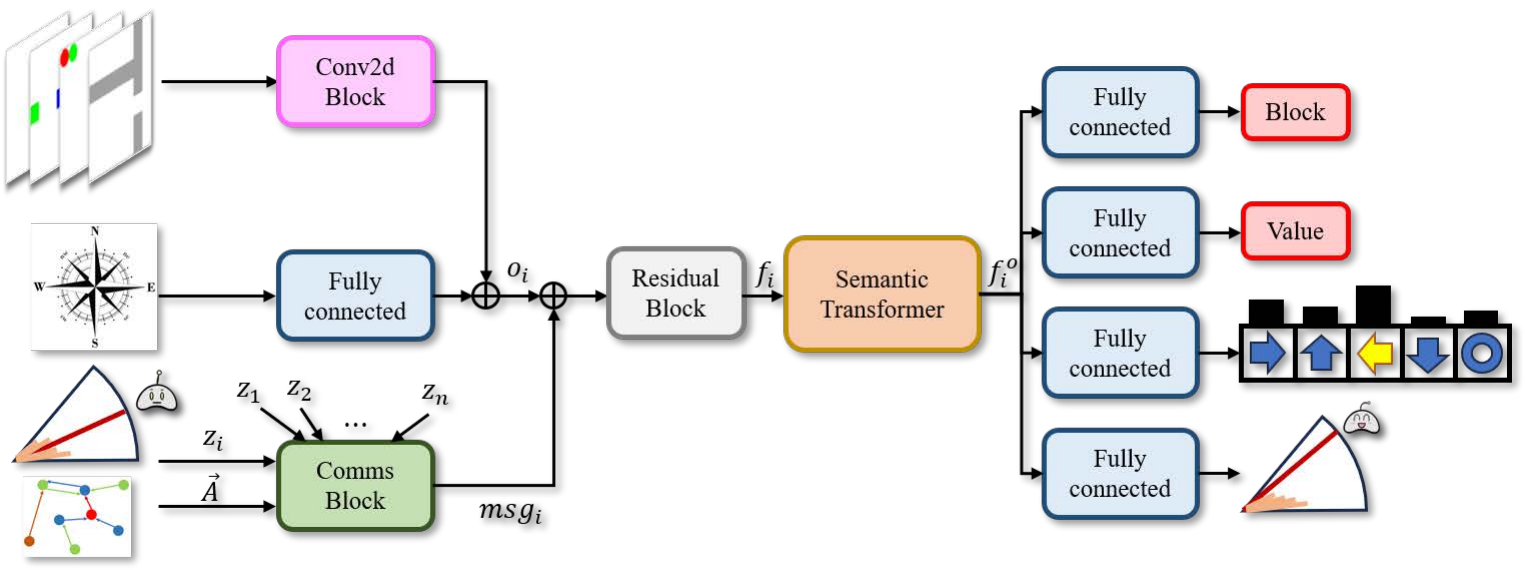}
\vspace{-0.3cm}
\caption{Overview of the Network of SYLPH.}
\label{net_overview}
\vspace{-0.4cm}
\end{figure}


\subsection{Communication Block}

\begin{figure}[t]
\centering
\includegraphics[width=5.4in]{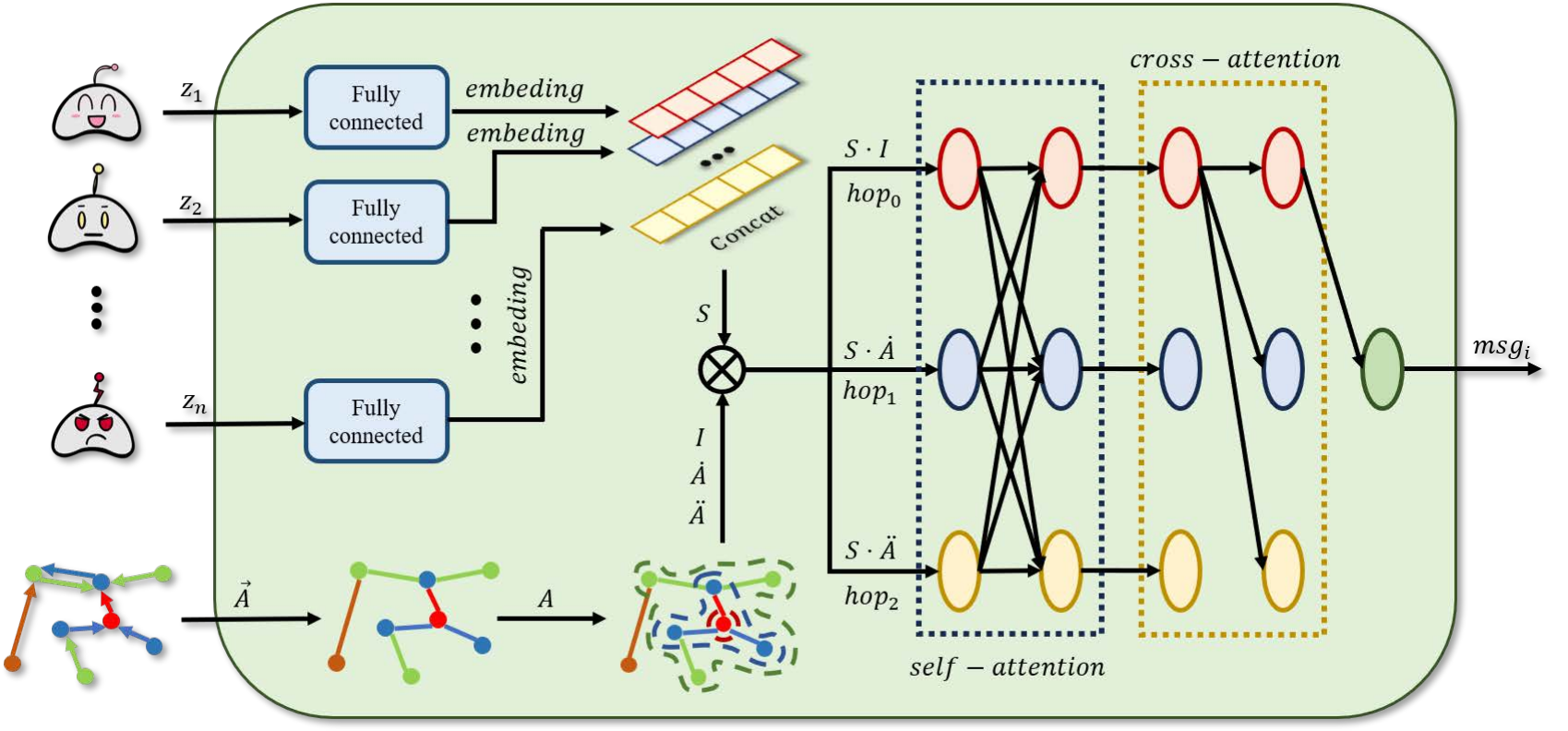}
\vspace{-0.3cm}
\caption{Details of communication block.}
\label{comms_block}
\vspace{-0.4cm}
\end{figure}

Our communication block draws inspiration from the Hop2Token aggregation approach detailed in~\cite{chen2022nagphormer}. 
This approach focuses on aggregating vertices within a graph into "hops" based on the graph's structure, acknowledging that these hops possess varying degrees of significance to the ego agent. 
This perspective aligns perfectly with our scenario, where agents are represented as vertices and their selected partners as edges in a graph. 
Notably, even distant agents can become adjacent nodes if their influence on the ego agent is substantial.
Therefore, combined with the partner selection Algorithm~\ref{algo_1} of Section~\ref{partner_selection}, hop-based communication is more suitable for our framework than general distance-based communication.

In the communication block illustrated in Fig.~\ref{comms_block}, the inputs include the global agent's social preference SVO $z_i$ and the adjacency matrix $\vec{A}$ of the directed graph $\mathcal{G}_a$, which is constructed according to the partner selection mechanism. 
The first step involves converting $\vec{A}$, the directed adjacency matrix, into $A$, its undirected counterpart. 
Subsequent operations involve raising $A$ to the $0$th, $1$st, and $2$nd powers and symmetric normalization ($SN$), yielding the identity matrices $I$, $\dot{A}$, and $\ddot{A}$ as multipliers for multi-hops. 
Concurrently, the SVO probability distribution $z_i$ is transformed into SVO embeddings via fully connected layers, serving as vertex features $v_i$.
These vertex features are then combined with multi-hop multipliers to encapsulate the graph's entire feature into a condensed number of multi-hop nodes $k$ ($k=2$ in practice), significantly streamlining computations ($k\ll n$). 
Following this feature aggregation, the multi-hop nodes $h_i$ are processed through multi-head self-attention and cross-attention layers. 
The output features of $0$-hop node $h_0$, emerging from these attention mechanisms, are utilized as the encoder embedding $msg_i$ for the SVO channel, effectively capturing the interactions and dependencies within the graph from agent $i$'s perspective.
Formally:
\begin{equation}
\text{Embedding}\left\{
\begin{aligned}
    &A = \vec{A} \vee \vec{A^T} \\
    &I, \dot{A}, \ddot{A} = SN(A^0, A^1, A^2) \\
    &S = Concat(FF(z_1), FF(z_2), \cdots, FF(z_n)) \\
    &H = Concat(h_0,h_1,h_2) = S\otimes I, S\otimes\dot{A}, S\otimes\ddot{A}
\end{aligned}
\right.
\end{equation}
\begin{equation}\label{eq_self_att}
\text{Self-attention}\left\{
\begin{aligned}
    &Q^h_s, K^h_s, V^h_s = W^h_{Q, s}H, W^h_{K, s}H, W^h_{V, s}H \\
    &A^h_s = Att(Q^h_s, K^h_s, V^h_s) = Softmax(\frac{Q^h_s\cdot {K^h_s}^T}{\sqrt{d_k}})\cdot V^h_s \\
    &\hat{h}_s = BN(H + Concat(A^1_s, A^2_s, \cdots, A^H_s)W_{O,s}) \\
    &H_s^o = BN(\hat{h}_s+FF(\hat{h}_s))
\end{aligned}
\right.
\end{equation}
\begin{equation}
\text{Cross-attention}\left\{
\begin{aligned}
    &Q^h_c, K^h_c, V^h_c = W^h_{Q, c}h_0, W^h_{K,c}H_s^o, W^h_{V, c}H_s^o \\
    &A^h_c = Att(Q^h_c, K^h_c, V^h_c) = Softmax(\frac{Q^h_c\cdot {K^h_c}^T}{\sqrt{d_k}})\cdot V^h_c \\
    &\hat{h}_c = BN(h_0 + Concat(A^h_c, A^h_c, \cdots, A^H_c)W_{O,c}) \\
    &h_c^o = BN(\hat{h}_c+FF(\hat{h}_c))
\end{aligned}
\right.
\end{equation}
where $FF(\cdot)$ means the fully connected layer; $Concat(\cdot,\cdot)$ indicates the concatenate operator; $BN(\cdot)$ denotes batch normalization.


\subsection{Semantic Transformer Block}

\begin{figure}
\centering
\includegraphics[width=5.4in]{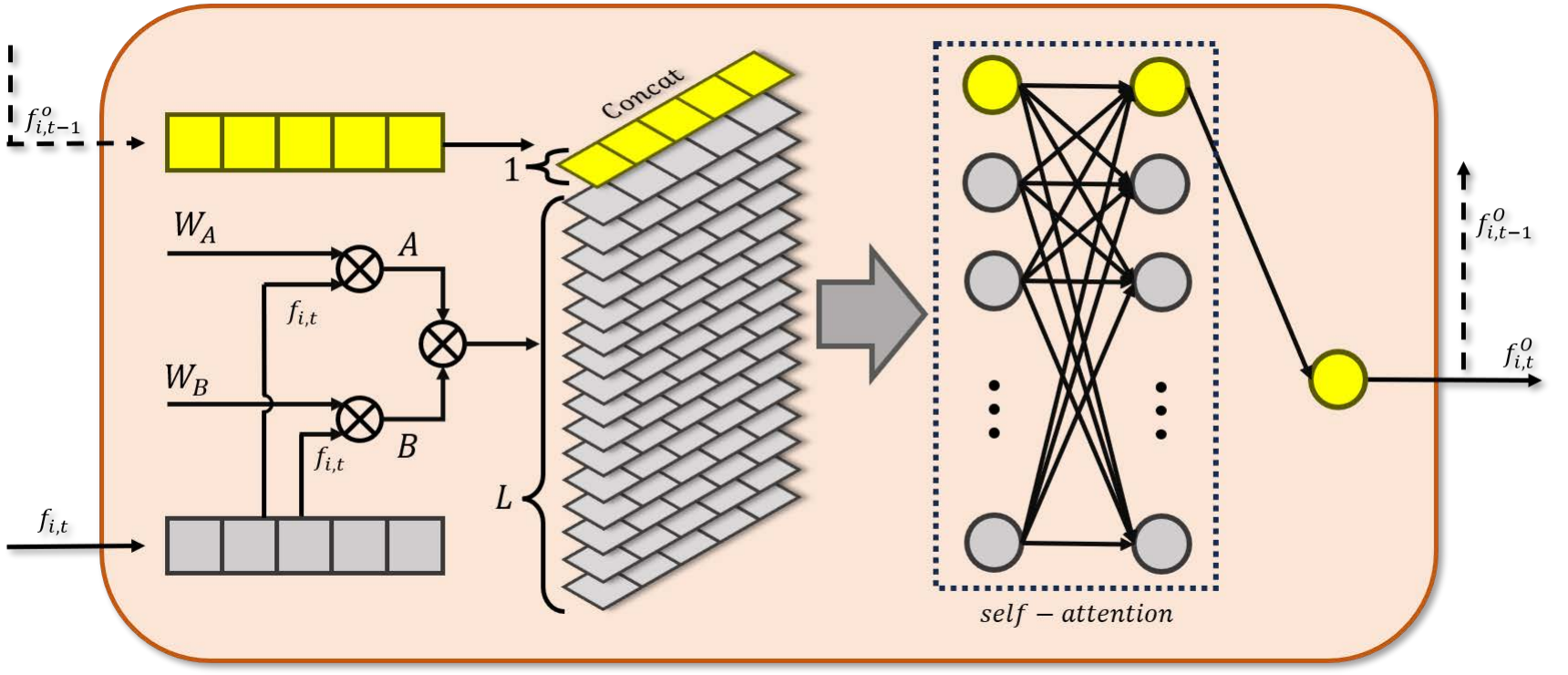}
\vspace{-0.3cm}
\caption{Details of semantic transformer block.}
\label{semantic_block}
\vspace{-0.4cm}
\end{figure}

In contrast to earlier efforts~\cite{virmani2021subdimensional} that applied a Visual Transformer as an encoder to enhance embeddings for observations with a larger FoV, our approach uniquely integrates a Semantic Transformer into our existing Multi-Agent Path Finding (MAPF) framework, supplanting the LSTM component. 
This transformer serves as a decoder, analyzing and restructuring low-level features from diverse observational channels related to the agent, such as FoV observations, the directional vector to its goal, and the SVOs of agents. 
Inspired by the tokenizer approach used in image semantic segmentation as discussed in~\cite{wu2020visual}, we suggest that the agent's environmental state can also be captured in a similar way.
By collecting and reorganizing low-level features, we aim to distill these into a predetermined number of semantic tokens ($L=16$ in practice) via a spatial attention mechanism. 
These tokens, embodying high-level semantic concepts, articulate the agent's environmental state context and are subsequently processed by a standard transformer~\cite{vaswani2017attention} for contextual reasoning.
Notably, echoing the temporal reliance inherent in a long-short-term memory (LSTM) cell, our model aspires to imbue the semantic reasoning process with a memory / recurrent mechanism. 
As illustrated in Fig.~\ref{semantic_block}, an additional memory token, representing the output from the previous timestep's Semantic Transformer module, is incorporated. 
The cross-attention outcomes between this memory token and the current timestep's generated semantic tokens are harnessed as the present output of the Semantic Transformer module. 
This refined output informs the generation of the agent's diverse policy actions, melding past insights with current observations to navigate the agent in a clever way. 
It can be expressed mathematically as follows:
\begin{equation}
\begin{aligned}
    S_T = Concat(f^O_{i,t-1},((f_{i,t}\cdot W_A)\otimes(f_{i,t}\cdot W_B))).
\end{aligned}
\end{equation}
$S_T$ represents a set of tokens that includes $L$ semantic tokens alongside a memory token. 
It is then fed into a standard Transformer architecture, where it undergoes an update process facilitated by the self-attention mechanism (as shown in Eq~\ref{eq_self_att}) inherent to Transformers. 
The self-attention mechanism enables the model to dynamically weigh the importance of each token within $S_T$ relative to the others, thereby refining their representations based on the contextual relationships within the set.
Following the processing through the Transformer, the memory token is specifically extracted from the updated set $S_T$. 
This memory token serves a dual purpose: it not only represents the output of the semantic transformer block for the current timestep, but also becomes the input for the semantic transformer block in the subsequent timestep. 
This cyclical integration of the memory token allows for the preservation and transference of context and learned information across timesteps, effectively enabling the model to maintain a continuous thread of relevant information throughout the sequence of actions or decisions.
The incorporation of a memory token within the set of semantic tokens facilitates a more nuanced and contextually aware processing of information, enhancing the model's ability to make informed decisions based on both the current semantic context and the historical context encapsulated within the memory token.


\section{Experiments}

In this section, we give some simulations and experimental validation for the proposed framework and mechanism.

\subsection{Symmetric Pathfinding Case Study}
\label{sym_exp}

In our study, we first consider experiments in a completely symmetric environment, a setting inherently susceptible to social dilemmas. 
Using the configuration employed in~\cite{bettini2023heterogeneous} (as depicted in Fig.~\ref{exp1}(a)), we placed two agents at opposing ends of a narrow corridor which are wide enough to accommodate just one robot at a time, with each agent's starting position serving as the other's goal.

\begin{figure}[h]
\centering
\includegraphics[width=4in]{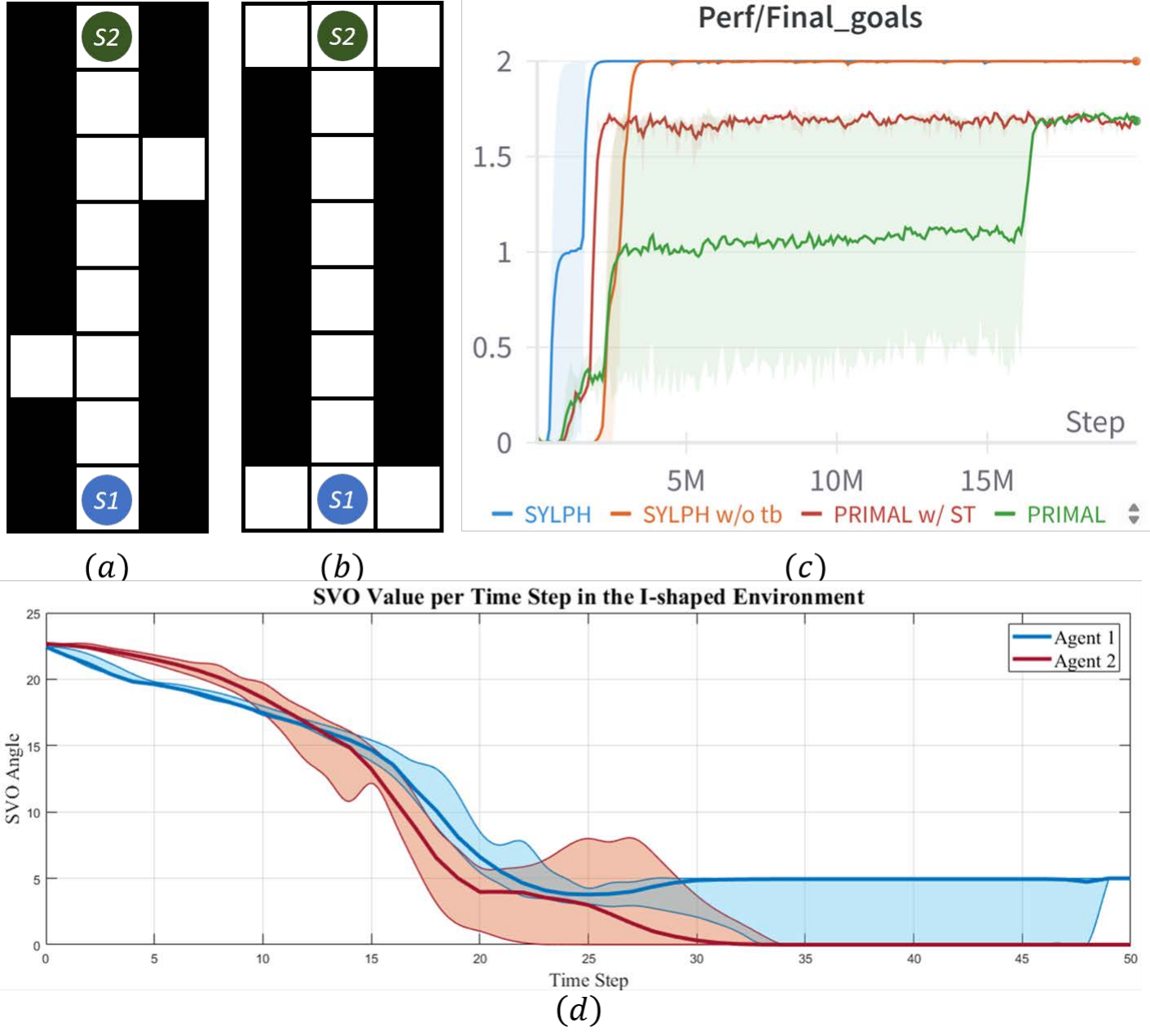}
\vspace{-0.3cm}
\caption{Two agents symmetric pathfinding case study.}
\label{exp1}
\vspace{-0.4cm}
\end{figure}

The first type of corridors have two recesses capable of fitting one robot each, making the pathfinding environment solvable.
These recesses are symmetrically positioned, yet unlike~\cite{bettini2023heterogeneous}, the corridor length and the recesses' locations are subject to random variation in our experiments.
Further diversifying our experimental setup, we introduced an \texttt{I}-shaped map, presented in Fig.~\ref{exp1}(b). 
Similar to the first environment, the narrow corridor's length within this map remains variable. 
Throughout our training process, we maintained a probabilistic distribution for the occurrence of each map type: $80~\%$ ($p_r=80~\%$) for the corridor with recesses (recess map) and $20~\%$ ($p_I=20~\%$) for the \texttt{I}-shaped map.

To deduce the expected number of goals reached by PRIMAL\footnote{All references to PRIMAL in this section are modified versions, not the original one. 
There are two primary differences: firstly, the model is trained using the Proximal Policy Optimization (PPO) framework rather than the Asynchronous Advantage Actor-Critic (A3C); secondly, the model's FoV incorporates heuristic maps designed in DHC.} and SYLPH under the given map probabilities, we noticed the training results of PRIMAL reaches approximately 1.7 goals on average, while SYLPH achieves 2 goals, as shown in Fig.~\ref{exp1}(c). 
Given the probabilities of encountering each map type, $p_r=80~\%$ for the recess map and $p_I=20~\%$ for the \texttt{I}-shaped map, we can calculate the expected contributions from each map type to the overall performance metrics:
\begin{equation}
    \begin{aligned}
        & \mathbb{E}(g_{PRIMAL}) = p_{r} \times 2 +p_{I} \times 1  = 1.7\\
        & \mathbb{E}(g_{SYLPH}) = p_{r} \times 2 + p_{I} \times 2 = 2
    \end{aligned}
\end{equation}
PRIMAL's difficulty with the \texttt{I}-shaped map, despite its competence in navigating the recess map, underscores a fundamental limitation in its approach: an inclination towards self-preservation that precludes effective resolution of social dilemmas. 
This inclination towards self-interest, a characteristic strongly embedded in PRIMAL-trained models, hampers their ability to high level collaboration. 
As a result, they are often trapped in live-/deadlock instead of superior collective outcomes.
SYLPH, on the other hand, breaks this perfect symmetry and enhances coordinated maneuvers by introducing different social preferences/SVOs to agents. 
This enables some agents to make more prosocial/selfless choices in such a completely symmetric environment, promoting the achievement of team goals over individual optimization.
For instance, in the \texttt{I}-shaped map scenario, the change curves of SVO over time for different agents are shown in Fig.~\ref{exp1}(d). 
The diverse SVO values among the team effectively addresses the coordination challenges inherent in the \texttt{I}-shaped map, showcasing SYLPH's superior adaptability and its potential to resolve complex social dilemmas within symmetric environments.


\subsection{Comparative Experiments}

\subsubsection{Performance comparison}

In our comparison experiments, we primarily assessed three metrics: (1) the \textit{success rate} across 200 instances under a similar configuration, (2) the average \textit{episode length} required to complete these instances, and (3) the average ratio of agents arrive their goals (\textit{arrival rate}) per instance among the 200 instances.
The first metric directly assesses the effectiveness of the planner in achieving complete solutions across a wide range of scenarios. 
A high success rate indicates robustness and reliability, showing that the planner can consistently solve the MAPF problem under varying conditions.
The second one reflects the efficiency of the pathfinding algorithm and the quality of the solutions it generates.
The last metric is particularly revealing, as it accounts for the performance of individual agents within partially successful episodes. 
It provides a more detailed picture of a planner's performance, especially in scenarios where not all agents may reach their destinations due to complex interactions or partial failures.
For traditional algorithms, success rate and arrival rate often coincide because these methods typically either succeed fully (all agents reach their goals) or fail entirely (one or more agents do not finish). 
Thus, these metrics tend to reflect the same performance aspect.
The learning-based planners might allow for more flexibility in agent behavior, leading to situations where some agents succeed while others do not in the same instance. 
Therefore, assessing these methods requires a finer-grained metric like the arrival rate to capture the nuances of partial successes and individual agent failures, which the success rate alone might overlook.

\begin{figure}[h]
\centering
\includegraphics[width=5.2in]{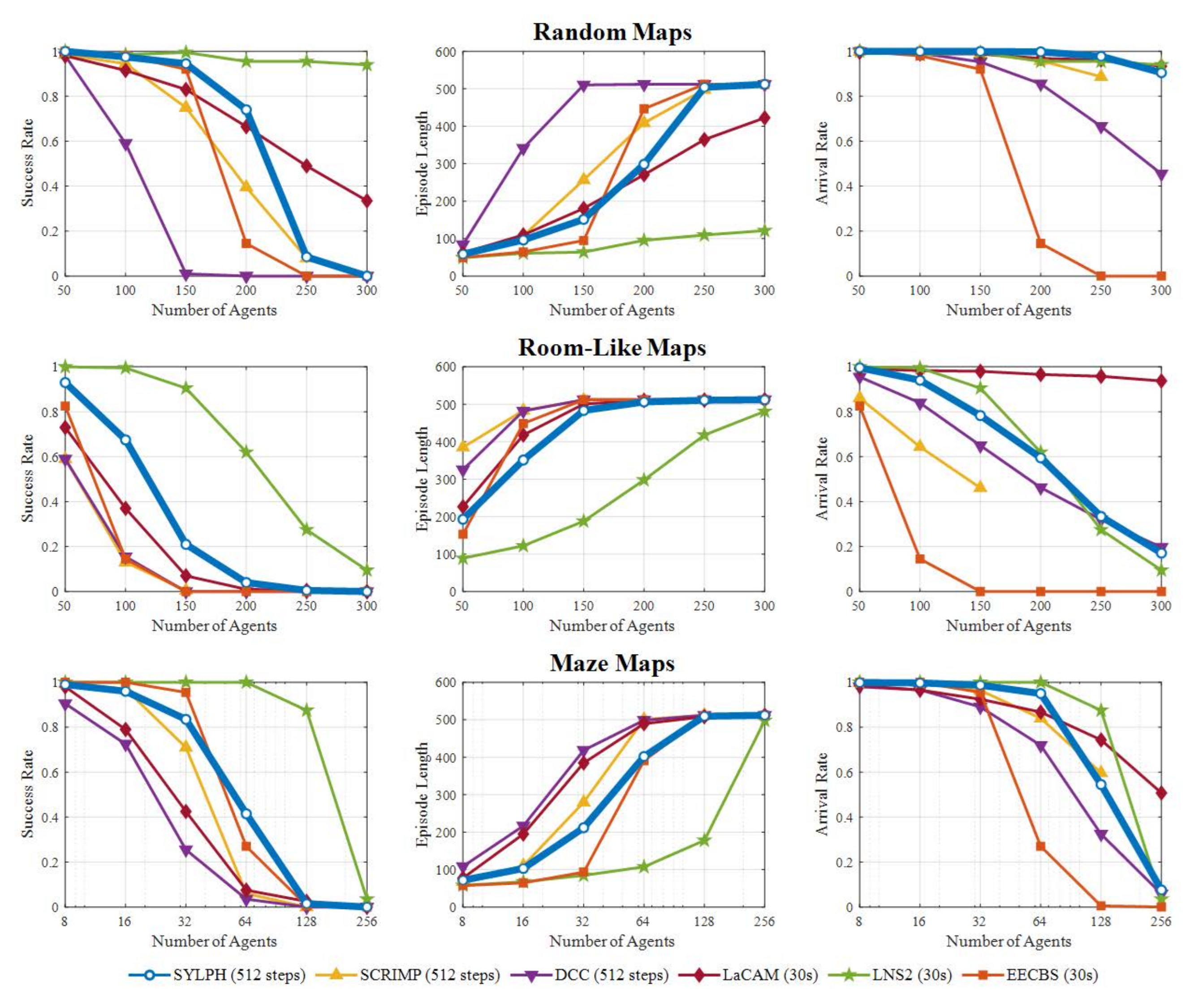}
\vspace{-0.3cm}
\caption{Comparative results of SYLPH and other baseline algorithms across various maps and configurations, analyzed using three performance indicators: success rate, episode length, and arrival rate.}
\label{compare}
\vspace{-0.4cm}
\end{figure}

To benchmark the performance of SYLPH against other state-of-the-art (SOTA) methodologies in MAPF field, we conducted a series of tests across a spectrum of extreme environment configurations. 
This rigorous testing was designed to evaluate how well SYLPH and its comparative baselines navigate increasingly complex scenarios, reflected in the type of the environments and the density of agent populations. Specifically, our test environment configuration is as follows:
\begin{itemize}
\item Random Maps: Utilized a $32\times 32$ grid world with an obstacle density of 0.2 (\textit{random-32-32-0.2}) and tested with varying numbers of agents: 50, 100, 150, 200, 250, and 300, to observe the scalability of the algorithms under different agent densities.
\item Room-like Maps: Similar in dimension and agent count to the random maps, these environments featured an increased obstacle density ($\approx 0.3$) and the orderly distribution of obstacles (\textit{room-32-32-0.3}), simulating more structured environments with additional coordination challenges.
\item Maze Maps: Given the heightened complexity and obstacle density of approximately 0.5 (\textit{maze-32-32-0.5}), the amount of agents was adjusted to smaller team size: 8, 16, 32, 64, 128, and 256, to test the algorithms' efficiency in highly constrained environments.
\end{itemize}

To provide a comprehensive comparison, we included both SOTA learning-based MAPF planners and traditional MAPF algorithms as baselines:
\begin{itemize}
\item SCRIMP~\cite{wang2023scrimp}: This is a learning-based MAPF planner recognized as SOTA. It achieves high performance across various environments by leveraging communication learning and a value-based tie-breaking mechanism. However, SCRIMP tends to be time-consuming in densely populated environments.
\item DCC~\cite{ma2021learning}: Another SOTA learning-based method, DCC, utilizes an attention mechanism to select communication partners during interactions, effectively reducing the communication load. While DCC is faster than SCRIMP, it does exhibit a lower success rate.
\item EECBS~\cite{li2021eecbs}: It represents a SOTA bounded suboptimal planner. Typically, setting the suboptimality factor as 1.2 strikes the best balance between path optimality and computational efficiency.
\item LNS2~\cite{li2022mapf}: This is currently one of the top MAPF planners. It resolves path conflicts through priority-based planning and continuous iteration, achieving near-optimal solutions with extremely high success rates.
\item LaCAM~\cite{okumura2023lacam}: 
Building on the PIBT solution, it searches all possible configurations to enhance planner performance. Unlike other traditional methods, LaCAM measures the arrival rate, providing an additional performance metric than just the success rate.
\end{itemize}
These baseline comparisons are intended to showcase SYLPH's adaptability and performance across different environmental complexities and agent densities. 
By testing SYLPH against both cutting-edge learning-based planners and established traditional algorithms, we aimed to highlight the framework's strengths, particularly in scenarios requiring advanced coordination and long-horizon coordination capabilities amidst varying degrees of environmental constraints.

\begin{table}[!t]
  \centering
  \caption{Time consumption of SYLPH and its baselines under different configurations.}
  \vspace{0.1cm}
  \scalebox{0.8}{
    \begin{tabular}{c|cc|cc|cc}
    \toprule              
     Cases  & \multicolumn{2}{c}{$radnom$}               & \multicolumn{2}{c}{\textit{room-like}}   & \multicolumn{2}{c}{\textit{maze}}         \\
     Configuration & \multicolumn{2}{c}{\textit{32-32-0.2-200}} & \multicolumn{2}{c}{\textit{32-32-0.3-100}} & \multicolumn{2}{c}{\textit{32-32-0.5-32}} \\
    \midrule\midrule                                 
    Time Type & General & Success & General & Success & General & Success \\
    \midrule\midrule
    SYLPH    &  100.944$s$  &  79.621$s$  &  48.643$s$   & 35.533$s$  &  5.867$s$   &  4.229$s$   \\
    \midrule
    SCRIMP   &  627.914$s$  & 402.702$s$  &  422.171$s$  & 127.057$s$ &  14.635$s$  &  9.967$s$   \\
    \midrule
    DCC      & 218.786$s$  & --   & 56.918$s$   &  36.728$s$  &  5.427$s$   & 1.891$s$   \\
    \midrule
    EECBS    & 2.753$s^\blacktriangle$  & 60.001$s^\blacktriangle$   & 1.211$s^\blacktriangle$ & 52.515$s^\blacktriangle$  &  1.544$s^\blacktriangle$   & 4.247$s^\blacktriangle$   \\
    \midrule
    LNS2     & 1.620$s^\blacktriangle$  & 0.429$s^\blacktriangle$  &  2.635$s^\blacktriangle$   & 2.496$s^\blacktriangle$  &  0.029$s^\blacktriangle$  & 0.029$s^\blacktriangle$  \\
    \midrule
    LACAM    & 30.023$s$  & 28.120$s$   & 30.012$s$   & 20.015$s$  &  30.010$s$   & 29.319$s$   \\
    \bottomrule
\end{tabular}
  }
  \label{time_tab}
\end{table}

In Fig.~\ref{compare}, the comparative performance of SYLPH across various map configurations, including random, room-like, and maze maps, demonstrates its superior capability over other learning-based planners, including DCC and SCRIMP. 
The success of SYLPH across these metrics – reliability, scalability, and performance in structured environments – highlights its effective resolution of symmetry conflicts, a common challenge in highly structured scenarios such as room-like and maze maps.
This performance advantage is attributed to the diversity of social preferences within the agent population, enabling SYLPH to navigate complex interactions more effectively.
Additionally, SYLPH's design avoids the time-consuming post-processing phases that other learning-based methods rely on. 
Instead, it enhances model performance directly through its training and execution phases, which not only yields higher performance but also reduces runtime significantly compared to its counterparts, as shown in Table~\ref{time_tab}\footnote{The superscript $\blacktriangle$ indicates that the time is $C$++ time, otherwise it is $Python$ time.}.

When compared with the SOTA bounded suboptimal MAPF planner, EECBS, SYLPH exhibits distinctly better performance in both random and room-like maps, and comparable results in maze maps. 
Against LaCAM, particularly in more densely populated random maps, LaCAM shows a higher success rate, yet SYLPH maintains competitive individual agent arrival rates. 
In more structured environments, SYLPH generally surpasses LaCAM in success rates, although LaCAM may achieve better arrival rates.
Against LNS2, widely regarded as the most advanced MAPF planner for its reliability and optimality, SYLPH and all other planners can not get comparable results. 

These results confirm that SYLPH not only rivals but sometimes surpasses the performance of traditional SOTA MAPF algorithms in highly structured environments. 
This is a significant achievement for a learning-based framework, demonstrating that SYLPH's socially-aware approach effectively enhances its applicability and effectiveness across diverse and challenging MAPF scenarios. 
This experimental evidence solidifies SYLPH's position as a new SOTA method in learning-based MAPF, capable of delivering high-performance outcomes where other learning-based methods may struggle.

\subsubsection{Paired t-test}

In order to rigorously assess the effectiveness of the SVO policy implemented in SYLPH, we conducted a series of paired t-tests to statistically analyze the performance differences between SYLPH and configurations with randomly assigned SVOs. 
This study was structured to compare SYLPH against three random SVO assignment methods:
\begin{itemize}
    \item Random SVO Assignment at Each Step: This method, where an agent's SVO is randomly reassigned at every step, resulted in confusion and ineffective decision-making, as the frequent changes prevented the agents from leveraging any consistent strategy, leading to non-convergence.
    \item Static SVO Assignment for Each Episode: Assigning SVOs at the start of each episode and maintaining them throughout resulted in better stability and some level of task accomplishment.
    \item SVO Update Frequency Matching Partner Switching: Aligning the frequency of SVO updates with the frequency of switching partners shown similar performance as static SVO assignment for each episode.
\end{itemize}

The experiments were conducted across different map configurations (e.g. \textit{random-32-32-0.2-200}, \textit{room-32-32-0.3-100}, and \textit{maze-32-32-0.5-32}) with 8 agents\footnote{The training curves can be found at~\ref{random_svo}}, and the performance was statistically analyzed over 200 instances for each configuration. 
Specifically, for the random SVO experiments, to prevent any unfairness due to randomness, we conducted the experiment 10 times and used the average result.
The results from these tests, as detailed in Table~\ref{paired_t_test}, revealed significant statistical differences between the performances of SYLPH and the random SVO methods. 
Specifically, the p-values \textit{p} obtained from the paired t-tests were much lower than the conventional significance threshold (0.05, 0.01, or 0.001), indicating a statistically significant difference in performance favoring SYLPH.

\begin{table}[!t]
  \centering
  \caption{Paired t-test results.}
  \vspace{0.1cm}
  \scalebox{0.8}{
    \begin{tabular}{c|ccc}
    \toprule              
     Paired  & \textit{random}  & \textit{room-like} & \textit{maze}        \\
     t-test & \textit{32-32-0.2-200} & \textit{32-32-0.3-100} & \textit{32-32-0.5-64} \\
    \midrule\midrule
    Static SVO Per Episode            &  $\textit{p}<0.001$  &  $\textit{p}<0.001$  &  $\textit{p}<0.001$   \\
    \midrule
    Static SVO Per Partner   &  $\textit{p}<0.001$  &  $\textit{p}<0.001$  &  $\textit{p}<0.001$  \\
    \bottomrule
\end{tabular}
  }
  \label{paired_t_test}
\end{table}

The findings clearly demonstrate that while randomly assigned SVOs might achieve task completion in less structured and sparser scenarios, they lack the scalability and robustness required for handling complex, highly structured environments. 
Thus, the study conclusively validates the effectiveness of the learned SVO policy within the SYLPH framework, highlighting its critical role in advancing the capabilities of learning-based MAPF solutions.


\subsection{MAPF Ablation Experiments}

\begin{figure}
\centering
\includegraphics[width=5.4in]{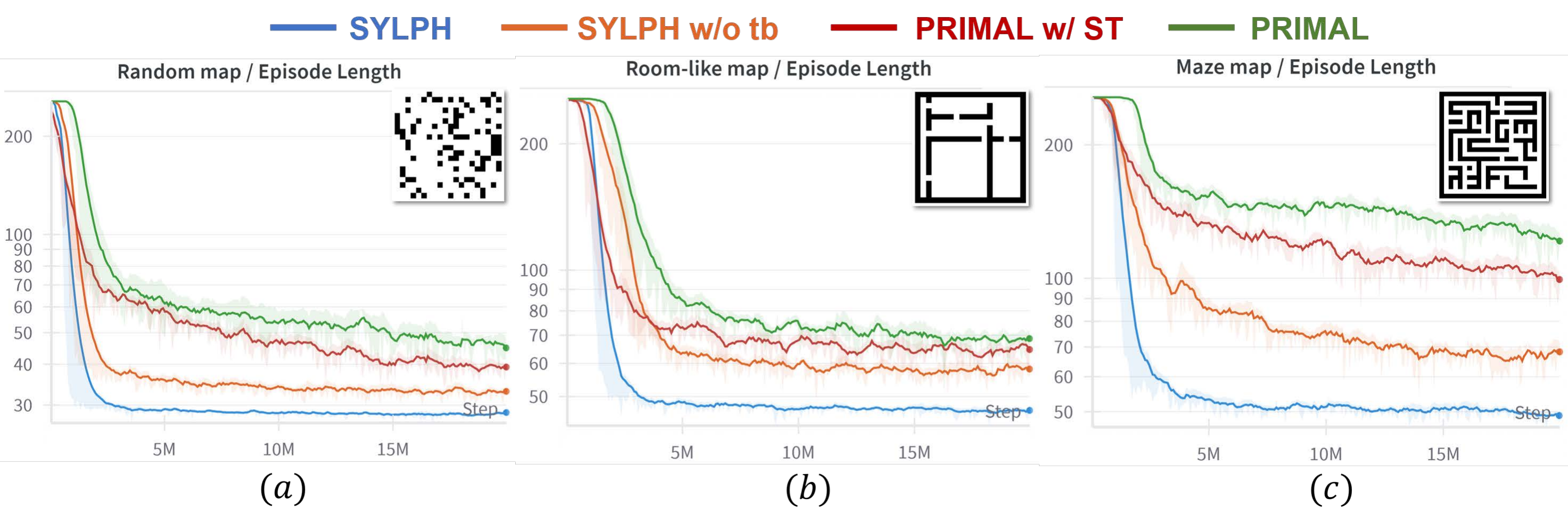}
\vspace{-0.3cm}
\caption{$(a)$, $(b)$, and $(c)$ represent the ablation experimental results of SYLPH variants in random maps, room-like maps, and maze maps, respectively. 
Consistent with Section~\ref{sym_exp}, PRIMAL here refers to a variant of PRIMAL that adds a heuristic maps.}
\label{ablation_el}
\vspace{-0.4cm}
\end{figure}

\textsc{All ablation studies took place in environments whose size is a random number sampled from $(10, 40)$ with 8 agents. 
A uniform maximum time limit was set for each episode, capped at 256 steps. 
All models are trained to 20M steps. }

The core idea of our architecture lies in integrating SVO as a temporally extended skill within the MAPF framework. 
This integration empowers agents with diverse social preferences, equipping them to navigate and resolve the symmetrical challenges typically presented by social dilemmas. 
To further enhance our framework, we replaced the LSTM component with a Semantic Transformer (ST), thereby augmenting the agent's spatial inference capabilities. 
Additionally, we developed a learning-based tie-breaking mechanism, offering a more efficient planner in densely populated environments compared to general post-processing approaches.

To evaluate the impact of these key elements, we tested four SYLPH variants across different map types (random, room-like, and maze maps): 1) The complete SYLPH model incorporating all mentioned components; 2) SYLPH minus the learning-based tie-breaking mechanism (SYLPH w/o tb), retaining SVO as a temporally extended skill; 3) SYLPH stripped of all SVO-related components, including partner selection, SMP3O, and communication block; 4) based on (3), the semantic transformer replaced by LSTM.
The performance enhancements brought by each element were quantified through experiments, with results showcased in Fig.~\ref{ablation_el}. 
We utilized Episode Length - the timesteps required for all agents to achieve their goals - as the performance metric. 
Notably, across the different map types, incremental additions of SYLPH components yielded significant performance improvements:
\begin{itemize}
    \item Random Map: Transforming from PRIMAL to PRIMAL w/ Semantic Transformer (ST) reduced episode length by $13.86~\%$. Incorporating SVO decreased episode costs by $29.67~\%$ compared with PRIMAL. After the inclusion of the tie-breaking mechanism, SYLPH's episode length is $38.93~\%$ lower than PRIMAL.

    \item Room-like Map: Here, the episode length saw a reduction of $4.71~\%$ with PRIMAL w/ ST, $15.93~\%$ with SYLPH w/o tb, and $33.02~\%$ with the full SYLPH model, all compared to the baseline PRIMAL performance.

    \item Maze Map: This environment highlighted the most pronounced improvements: $16.35~\%$ (PRIMAL w/ ST), $43.13~\%$ (SYLPH w/o tb), and $58.88~\%$ (SYLPH), showcasing the framework's efficacy in addressing social dilemmas, particularly prevalent due to the maze's long corridors. 
\end{itemize}
These findings underscore the substantial benefits of each component, especially in complex environments like maze maps where social dilemmas always appear. 
The progression from PRIMAL to the fully-fledged SYLPH model demonstrates the significant role of spatial inference enhancement, SVO diversification, and the learning-based tie-breaking mechanism in elevating model performance across varied and challenging MAPF scenarios.

\begin{table}[!t]
  \centering
  \caption{More Performance of Ablation Study.}
  \scalebox{0.8}{
    \begin{tabular}{c|ccc|ccc}
    \toprule              
    \textbf{Map} & \textbf{Semantic}   & \textbf{Social}   & \textbf{Tie-} & \textbf{External}  & \textbf{Blocked}    & \textbf{Goals}         \\
    \textbf{Type} & \textbf{Transformer}& \textbf{Behavior} & \textbf{Breaking} & \textbf{Reward}$\uparrow$ & \textbf{Agents}$\downarrow$ & \textbf{Reached} $\uparrow$ \\
    \midrule\midrule                                 
    &    --     & --    & --     &  -57.490    &    6.904      & 7.924      \\
    \cmidrule{2-7}
    Random & \checkmark  & --   & --   & -51.462  &  2.751   & 7.962   \\
    Map & \checkmark     & \checkmark   & --  &  -46.926     &    0.5562    &  7.986      \\
    & \checkmark    & \checkmark   & \checkmark   &  \color{red}\textbf{-39.110}   &   \color{green}\textbf{0.3345}    & \color{blue}\textbf{7.999}  \\
    \midrule\midrule                                 
    &    --     & --    & --    &  -85.358     &   3.133    & 7.87          \\
    \cmidrule{2-7}
    Room-like & \checkmark          & --      & --      &  -81.154      &    2.100          & 7.905              \\
    Map & \checkmark          & \checkmark          & --     &  -74.443         &    0.4126             & 7.932         \\
    & \checkmark          & \checkmark          & \checkmark          &   \color{red}\textbf{-61.210}        &  \color{green}\textbf{0.2183}       & \color{blue}\textbf{7.993}       \\
    \midrule\midrule                                 
    &    --     & --     & --      &  -158.168         &   27.168    & 7.382             \\
    \cmidrule{2-7}
    Maze & \checkmark          & --                  & --        &  -126.976   &    14.451    & 7.557                \\
    Map & \checkmark          & \checkmark          & --         & -90.484       &   1.709    & 7.836                \\
    & \checkmark          & \checkmark          & \checkmark          &  \color{red}\textbf{-69.69}         &  \color{green}\textbf{0.5189}    & \color{blue}\textbf{7.977}   \\
    \bottomrule
\end{tabular}
  }
  \label{ablation_tab}
\end{table}

More other performance changes are shown in Table~\ref{ablation_tab}.
Specifically, we introduce three more metrics: external reward, blocking times, and goals reached. 
These metrics collectively offer a comprehensive view of each system's efficiency and cooperative capabilities under the constraints of the specified experimental conditions.
\begin{itemize}
\item External Reward: This metric quantifies the cumulative reward that an agent receives from the environment over the course of an episode, exclusive of any adjustments or redistributions. A higher external reward is indicative of superior agent performance from RL aspect, reflecting successful task execution within the environment.

\item Blocking Times: This indicator measures the frequency with which an agent impedes the movement of other agents, including instances where an agent's presence significantly extends the optimal path length for others. A lower count of blocking times suggests better spatial awareness and consideration, contributing to smoother collective pathfinding.

\item Goal Reached: Representing the count of agents (with a maximum possible count of 8) that successfully reach their designated goals within the specified maximum steps (256 in this study), this metric directly reflects the effectiveness of the agents' pathfinding capabilities.
\end{itemize}
Table~\ref{ablation_tab} illustrates SYLPH's significant improvements across all three metrics in comparison to other variants, across diverse map types. 
It is worth noting that the reduction in blocking times highlights SYLPH's advanced planning and social behavior integration. 
Through the incorporation of social preferences and neighbor selection algorithm, agents are endowed with long-horizon coordination abilities, enabling them to proactively accommodate the movements of other agents and effectively prevent potential conflicts.
This proactive coordination behavior is a major focal point of SYLPH. 
Unlike reactive collision avoidance policy that adjust behaviors based on immediate dilemmas, SYLPH enables agents to formulate and execute a more sophisticated, forward-looking policies. 
This approach mitigates social dilemmas and conflicts, showcasing the framework's capacity for advanced, anticipatory coordination.


\subsection{Experimental Validation}

\begin{figure}[h]
	\centering
	\vspace{-0.3cm}
	\subfigtopskip=2pt 
	\subfigbottomskip=2pt 
	\subfigcapskip=-5pt
	\subfigure[random map]{
		\label{random_real}
		\includegraphics[width=0.4\linewidth]{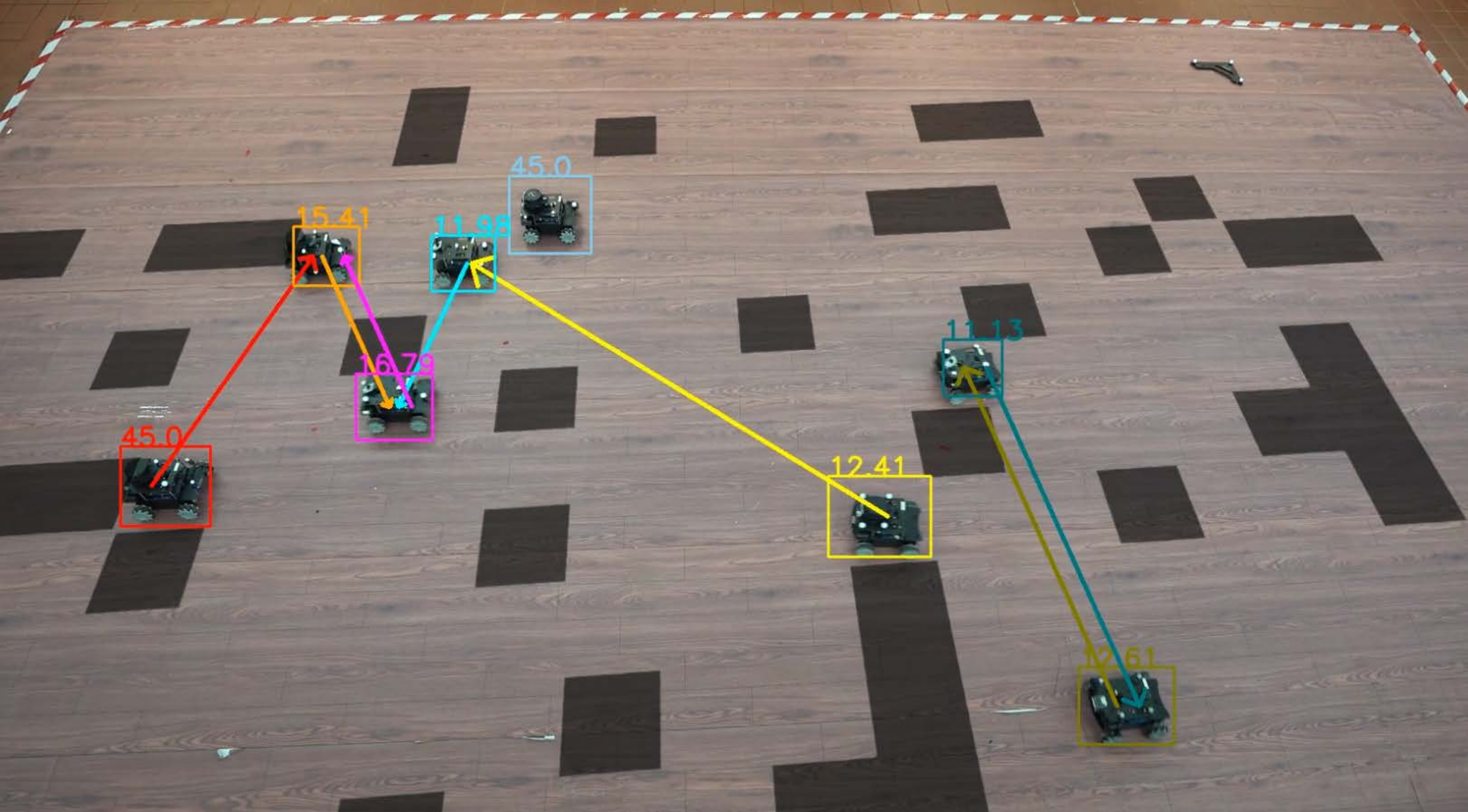}}
	\quad 
	\subfigure[room-like map]{
		\label{house_real}
		\includegraphics[width=0.4\linewidth]{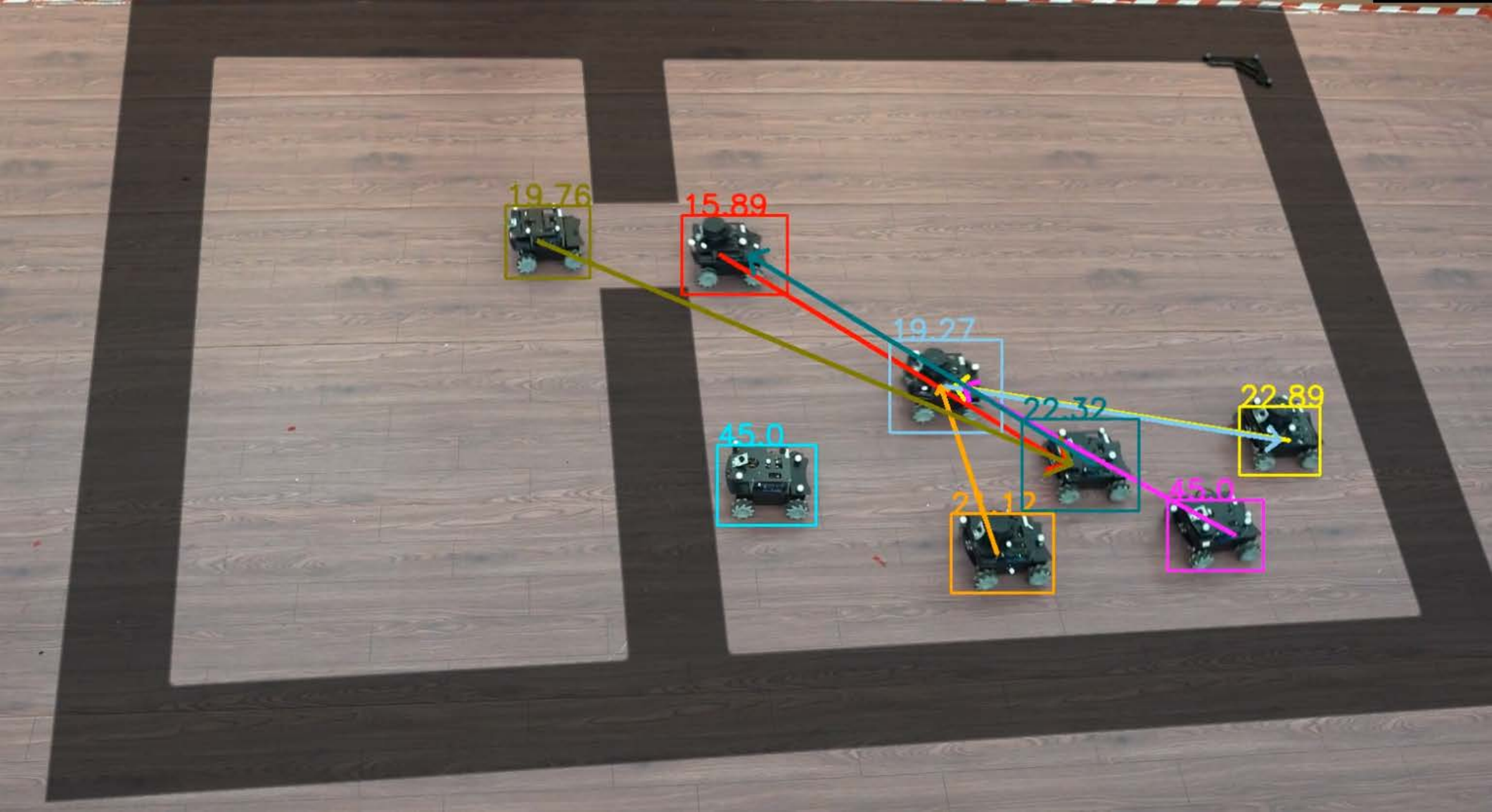}}
        \quad
	\subfigure[maze map]{
		\label{maze_real}
		\includegraphics[width=0.4\linewidth]{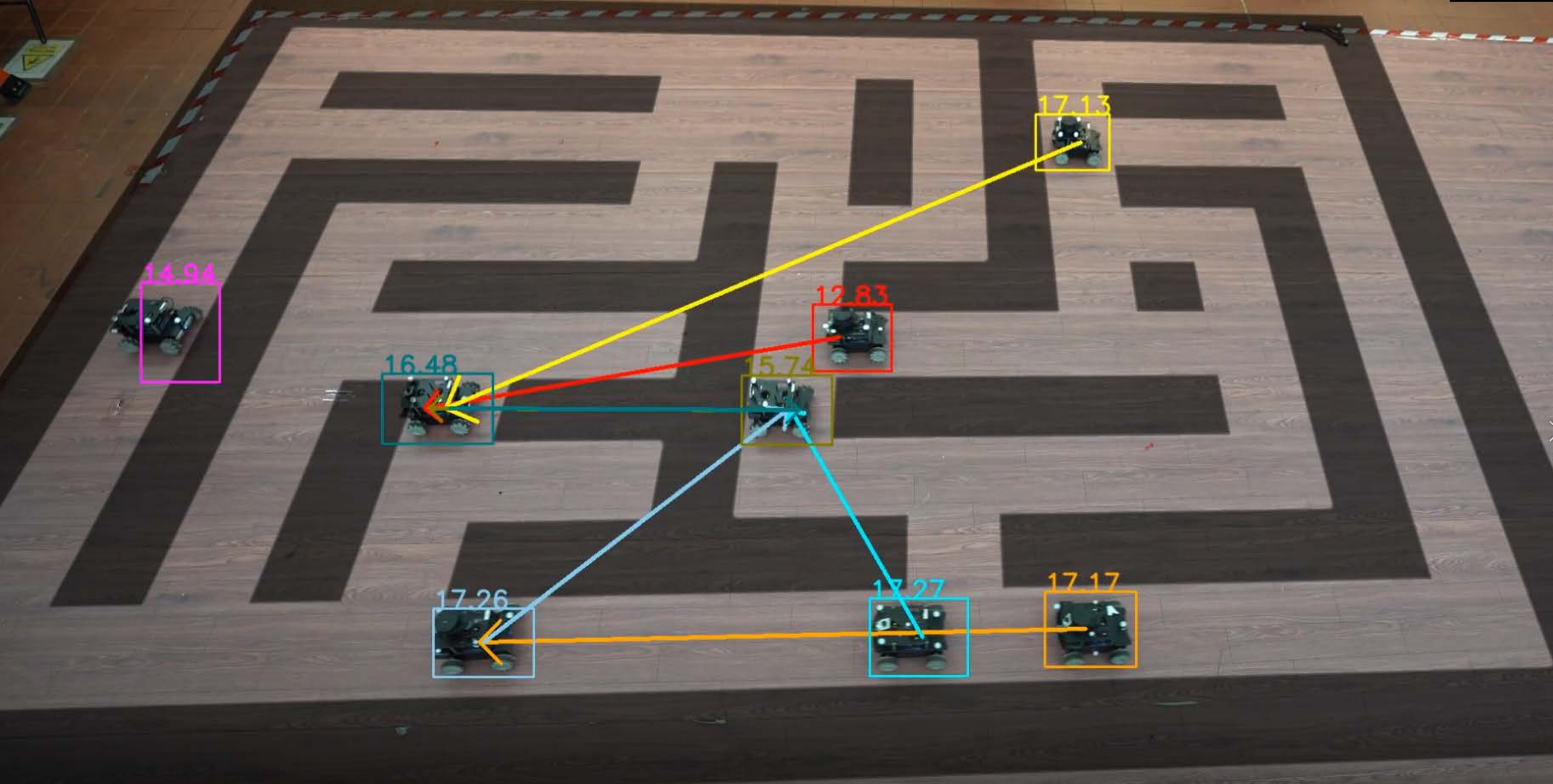}}
	\caption{Experiments with real robots on random map, room-like map, and maze map. In these figures, the black areas represent obstacles, the directed arrows indicate the partner selected by the agent, and the numbers denote the agent's current SVO (ranging from 0 to 45 degrees).}
	\label{real_exp}
\end{figure}

Fig.~\ref{real_exp} showcases our implementation of SYLPH with a team of 8 real robots pathfinding through a random map, a room-like map, and a maze map. 
For these real world pathfinding tasks, we used the standard SYLPH model trained on gridworlds without any additional finetuning/training on-robot.
To satisfy our algorithm's assumption that all agents' positions are always perfectly known, we used the \textit{Optitrack Motion Capture System} for precise localization of real robots.
We equipped the robots with Mecanum wheels to enable movement in the four cardinal directions. 
However, disturbances and control inaccuracies can cause deviations from the planned path. 
To address these issues, we employed an \textit{Action Dependency Graph} (ADG), as proposed by \cite{honig2019persistent}, which introduces a precedence order for agents occupying a cell to prevent execution errors from propagating and disrupting the overall plan.

Our experiments demonstrated that agents could reach their goals quickly and without collisions, with the ADG effectively eliminating execution errors. This highlights the potential of our method for real-world applications. Additional details about our experiment can be found in \ref{Eng_Dep}, and the full video is available in the supplementary materials.


\section{Conclusion}

This paper introduces SYLPH, a socially-aware learning-based MAPF framework designed to address potential social dilemmas by encouraging agents to learn social behaviors.
To these ends, we introduce the social value orientation (SVO) as a learnable dynamic choice for agents, to help them make decisions that benefit the group by coupling their individual interests with those of their partners.
The resulting different social preferences can break homogeneity in the team, helping couple agents directly in reward space to favor coordinated maneuvers, thereby improving cooperation among agents.
With explicit social preferences and advanced communication learning mechanism, agents are able to more effectively reason about each other’s behavior, as evidenced by the significant reduction in instances where agents blocked each other in the experiments.
Through extensive testing across various map types and agent densities, we showed that SYLPH consistently outperforms other learning-based frameworks like SCRIMP and DCC and, in some scenarios, matches the performance of traditional methods currently considered state-of-the-art.
Our framework pushes the performance boundaries of current learning-based MAPF planners and demonstrates that equipping agents with intelligent social behaviors can effectively resolve prevalent social dilemmas in MAPF problems.

Looking ahead, we plan to further refine our learning-based MAPF framework. 
Under our current training setting, SYLPH’s social preference learning enables agents to achieve $100\%$ success rate in various random environments during training. 
However, some unsolvable edge cases still arise in highly structured room-like and maze maps. 
In our future works, we will analyze these cases individually, identify their commonalities, and develop mechanisms that train agents to effectively resolve them, thus enhancing overall performance.
Additionally, the interpretability of agent behavior is another key area of interest for our future works.
We aim to establish a behavior prediction mechanism for agents by further promoting more stable SVO choices. 
We envision that this advantage may help the team create predictable and interpretable plans, which could be crucial for planning in mixed environments with humans.
That is, we envision that SVO may not only help humans understand the intentions of autonomous robots, but may also better incorporate humans in such shared environments by analyzing peoples' social preferences through the use of inverse reinforcement learning, towards improved robotic deployments in populated areas.


\section*{ACKNOWLEDGMENT}

This research was supported by the Singapore Ministry of Education (MOE), as well as by an Amazon Research Award.

\clearpage
\appendix
\section{Implementation Details}


\subsection{Hyperparameters}

Table~\ref{tab_hyperparameters} below presents the hyperparameters used to train the SYLPH model. 

\begin{table}[htp!]
\fontsize{9}{11}\selectfont
\centering
\begin{tabular}{|c | c || c | c |}
\hline
Hyperparameter & Value & Hyperparameter & Value\\  
\hline\hline
Number of agents & 8 & Value coefficient & 0.08 \\
\hline
Number of SVOs & 5 & Policy coefficient & 10 \\
\hline
Maximum episode length & 256 & Valid coefficient & 0.5\\
\hline
FOV size & 9 & Blocking coefficient & 0.5\\
\hline
FOV heuristic & 5 & Number of epochs & 10\\
\hline
World size & (10, 40) & Number of processes & 16 \\
\hline
Obstacle density & (0.0, 0.3) & Maximum number of timesteps & 2e7\\
\hline
Overlap decay & 0.95 & Minibatch size & 16\\
\hline
SVO importance factor & 2 & Imitation learning rate & 0\\
\hline
Learning rate & 1e-5 & Net size & 512\\
\hline
Discount factor & 0.95 & SVO channel size & 512\\
\hline
Gae lamda & 0.95 & Number of observation channels & 9\\
\hline
Clip parameter for probability ratio $\epsilon$ & 0.2 & Goal representation size & 12 \\
\hline
Gradient clip norm & 10 & Goal vector length & 4 \\
\hline
Entropy coefficient & 0.01 & Number of semantic tokens, \textit{L} & 16 \\
\hline
\end{tabular}
\caption{Hyperparameters table.}
\label{tab_hyperparameters}
\end{table}


\subsection{RL Reward Structure}

The reward structure for each agent's step is defined in~\ref{tab_reward} below. Similar to our previous works~\cite{wang2023scrimp,he2023alpha}, agents are penalized at each timestep unless they are on goal to promote faster episode completion. The episode terminates if all agents are on goal at the end of a timestep, or when the number of steps exceeds a pre-defined limit (256 for our model as given in~\ref{tab_hyperparameters}). 

\begin{table}[htp!]
\centering
\begin{tabular}{|c | c|}
\hline
Action & Reward \\
\hline \hline
Move (up/down/left/right) & -0.3 \\
\hline
Stay (off goal) & -0.3 \\
\hline
Stay (on goal) & 0.0 \\
\hline
Collision & -2 \\
\hline
Block & -1 \\
\hline
\end{tabular}
\caption{Reward Structure}
\label{tab_reward}
\end{table}

The blocking penalty means that if an agent occupies a space that prevents other agents from reaching their goals or significantly lengthens their optimal paths, a penalty is incurred proportional to the number of blockings caused.
The specific penalty received by an agent depends on the number of fellow agents it blocks $c$. The penalty can be represented as $-1\cdot c$, which have been used in Eq~\ref{proof_eq}.
This definition ensures that penalties are not only punitive but also proportionate to the level of inconvenience caused, encouraging agents to consider the broader implications of their decisions on system efficiency and collective goal attainment.

\section{Claim and Proof}
\label{proof_1}
\newtheorem*{remark}{Remark}
\newtheorem*{theorem}{Theorem}

Let there be an arbitrary value $a$ and $b$ selected from a finite set, and an arbitrary value $c$ such that:
\begin{equation}\label{proof_eq}
\begin{aligned}
    a &\in \{-c, -2, -0.3, 0\};\\
    b &\in \{-c, -2, -0.3, 0\};\\
    c &\in \mathbb{R}~~\textnormal{and}~~c \ge 1;
\end{aligned}
\end{equation}
where $a$ and $b$ mean the ego agent and its partner's external rewards, $R_i^{ex}$ and $R_p^{ex}$. According to Table~\ref{tab_reward}, we can figure out that $-0.3$ is the moving and staying idle (off goal) cost; $0$ is the penalty for agent standing on its goal; $-2$ denotes the collision penalty; and $-c$ is the blocking reward. 
Furthermore, we define a function $f(x)$ as:
\begin{equation}
\begin{aligned}
    f(x) = a\cdot\cos{x} + b\cdot\sin{x},~~x \in [0^\circ, 45^\circ];
\end{aligned}
\end{equation}

\begin{remark}
The function $f(x)$ models the equation of $R_i^a$ defined in Eq~\ref{reward_redistri}.
\end{remark}

\begin{theorem}
The function $f(x)$ is monotonically non-increasing $x$ for any valid values of $a$ and $b$.
\end{theorem}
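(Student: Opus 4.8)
The plan is to reduce the claim to a one-sided sign condition on the derivative. Writing $f(x) = a\cos x + b\sin x$, we have $f'(x) = -a\sin x + b\cos x$, so it suffices to show $f'(x) \le 0$ for every $x \in [0^\circ, 45^\circ]$. On this interval $\cos x > 0$, so $f'(x) \le 0$ is equivalent to $b \le a\tan x$. Since $\tan x$ sweeps $[0,1]$ as $x$ sweeps $[0^\circ,45^\circ]$ and $a \le 0$ (so $x \mapsto a\tan x$ attains its minimum $a$ at $x = 45^\circ$), the inequality $b \le a\tan x$ holds for all $x$ in the interval precisely when it holds at the right endpoint, i.e. when $b \le a$. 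When $a = 0$ this reads $b \le 0$, which is automatic because every value in $\{-c,-2,-0.3,0\}$ is nonpositive; so the whole statement comes down to the ordering $R_p^{ex} \le R_i^{ex}$.

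An equivalent, slightly cleaner route is the amplitude--phase form. Since $a,b \le 0$, write $f(x) = -\bigl(|a|\cos x + |b|\sin x\bigr) = -\sqrt{a^2+b^2}\,\sin(x+\psi)$ with $\psi \in [0^\circ,90^\circ]$, $\tan\psi = |a|/|b|$ (taking $\psi = 90^\circ$ when $b = 0$, and $f\equiv 0$ trivially non-increasing when $a=b=0$). Then $f$ is non-increasing on $[0^\circ,45^\circ]$ iff $\sin(x+\psi)$ is non-decreasing there, i.e. iff $45^\circ + \psi \le 90^\circ$, i.e. $\tan\psi = |a|/|b| \le 1$, i.e. $|b|\ge|a|$ — again the condition $b \le a \le 0$. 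In the conflict-free regime both agent and partner move or wait, so $a = b = -0.3$ and $f(x) = -0.3\sqrt{2}\,\sin(x+45^\circ)$ is already non-increasing on $[0^\circ,45^\circ]$, matching the intended reading that egoism is optimal absent conflict.

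The main obstacle is therefore not the calculus but justifying $b \le a$ for the reward pairs $(a,b) = (R_i^{ex},R_p^{ex})$ that can actually arise: naively $a$ and $b$ range independently over $\{-c,-2,-0.3,0\}$, and for pairs with $a < b \le 0$ (e.g. the ego agent colliding while its partner merely moves) $f$ is in fact increasing near $x=45^\circ$, so monotonicity genuinely needs the pairing structure from Section~\ref{svo_policy_genration} — specifically that a partner only absorbs the heavier penalty (a collision, or a blocking term $-c$) on exactly those timesteps where the ego agent asserts its own optimal path, forcing $R_p^{ex}\le R_i^{ex}$ on the steps that matter. I would close the proof by (i) arguing this admissibility restriction on $(a,b)$, and then (ii) discharging the finite case check $b\le a \Rightarrow f'\le 0$ on $[0^\circ,45^\circ]$ via the derivative computation above; step (i) is where the real content lies.
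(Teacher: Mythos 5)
Your derivative analysis is correct and in fact sharper than the paper's: on $[0^\circ,45^\circ]$ one has $f'(x)\le 0$ everywhere iff $b\le a\tan x$ for all $x$ in the interval, and since $a\le 0$ and $\tan x$ increases from $0$ to $1$, the binding case is $x=45^\circ$, giving exactly your condition $b\le a$. This immediately exposes that the theorem as literally stated --- with $a$ and $b$ ranging \emph{independently} over $\{-c,-2,-0.3,0\}$ --- is false: take $a=-2$, $b=0$ (ego agent collides while its partner rests on its goal); then $f(x)=-2\cos x$ is strictly increasing on $[0^\circ,45^\circ]$. The paper's proof by contradiction fails to detect this because it is itself flawed on three counts: it negates ``monotonically non-increasing'' as ``monotonically increasing'' (the wrong negation); the step from $-a\sin x+b\cos x>0$ to $-a\sin x>b\cos x$ drops the sign on the $b\cos x$ term (and the latter inequality is essentially vacuous anyway, since its left side is $\ge 0$ and its right side is $\le 0$); and the subsequent division by $a$ neither reverses the inequality for $a<0$ nor is defined for $a=0$. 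The ``contradiction'' $0>b/a$ is an artifact of these sign errors, so the paper's argument establishes nothing for pairs with $b>a$.

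Where your proposal has a genuine gap is exactly where you locate it: step (i), the admissibility restriction $b\le a$. That restriction is not guaranteed by the reward structure of Table~\ref{tab_reward} or by the partner-selection and tie-breaking machinery --- nothing prevents the ego agent from incurring the collision penalty $R_i^{ex}=-2$ on a timestep where its partner sits on its goal with $R_p^{ex}=0$, which is precisely the counterexample above. So the theorem cannot be rescued by the argument you sketch unless $R_p^{ex}\le R_i^{ex}$ is added as an explicit hypothesis (or the conclusion is correspondingly weakened). Your step (ii) is complete and correct; the honest conclusion is that the statement needs the extra hypothesis, not that a cleverer derivation will close the gap, and your blind attempt has in the process identified a real error in the paper's claim and proof.
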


\begin{proof}
For the sake of contradiction, assume that the value of the function $f(x)$ is monotonically increasing. In other words, assume that $x$ is increasing from $0^\circ$ to $45^\circ$ for any value $a$ and $b$. Then, it must be true that:
\begin{equation}
\begin{aligned}
    f'(x) = -a\cdot\sin{x} + b\cdot\cos{x} > 0
\end{aligned}
\end{equation}
Considering the domain of $x$ and the permissible values of $a$ and $b$, it must be true that:
\begin{equation}
\begin{aligned}
    -a\cdot\sin{x} \ge 0\\
    b\cdot\cos{x} \leq 0
\end{aligned}
\end{equation}
Hence, for the assumption to hold:
\begin{equation}
\begin{aligned}
    -a\cdot\sin{x} > b\cdot\cos{x} 
\end{aligned}
\end{equation}
Rearranging the inequality,
\begin{equation}
\begin{aligned}
    -a\cdot\frac{\sin{x}}{\cos{x}} &> b\\
    -\tan{x} &> \frac{b}{a}
\end{aligned}
\end{equation}
But for given domain of $x$,
\begin{equation}
\begin{aligned}
    -\tan{x} \leq 0
\end{aligned}
\end{equation}
If the above is true, it must be true that:
\begin{equation}
\begin{aligned}
    0 \ge &-\tan{x} > \frac{b}{a}\\
    \therefore 0 &> \frac{b}{a}
\end{aligned}
\end{equation}
Which is impossible from the possible values of $a$ and $b$ as they are both less than or equal to $0$. Therefore, the assumption is false and the function $f(x)$ is not monotonically increasing.

By contradiction, it must be true that the function $f(x)$ is monotonically non-increasing. In other words, the function $f(x)$ is consistent or decreasing when $x$ is increasing from $0^\circ$ to $45^\circ$ for any valid value of $a$ and $b$.

\end{proof}


\section{Random SVO Training Results}
\label{random_svo}

The experiments in this section is to verify the effectiveness and efficiency of the trained SVO policy provided by SYLPH. 
To clarify the efficiency of SYLPH, we assigned random SVOs to all agents in the comparative experiment, which also served to enhance population diversity.
Specifically, three comparative experiments were conducted to update the agents' SVO at different frequencies: 1) the SVO is set at the beginning of the episode and remains unchanged throughout, 2) the SVO remains unchanged until the agent switches partners, and 3) the SVO is randomly reset at each step.

\begin{figure}[t]
\centering
\includegraphics[width=5.2in]{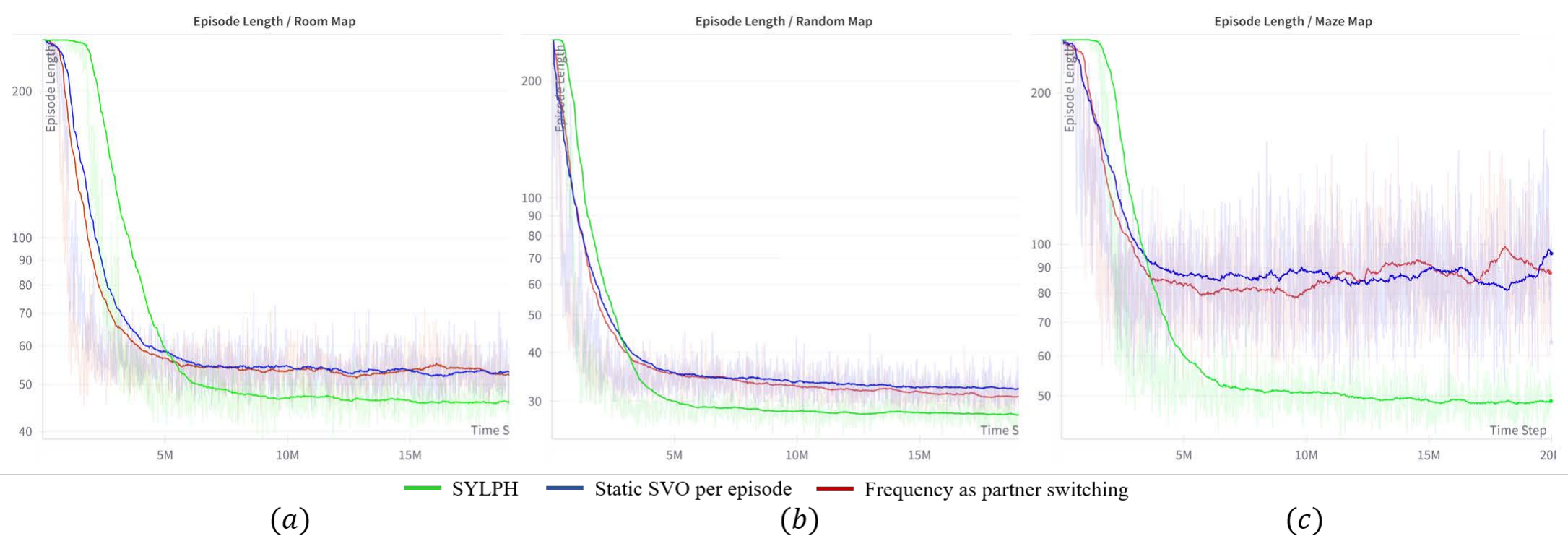}
\vspace{-0.3cm}
\caption{Training curves for SYLPH and two random SVO assignment which updates with varying frequencies across room-like maps, random maps, and maze maps.}
\label{rsvo_curves}
\vspace{-0.4cm}
\end{figure}

The outcomes of these experimental setups are depicted in Fig.~\ref{rsvo_curves}. 
Notably, the policy of resetting the SVO at each step resulted in confusion among agents about their SVO policy, preventing the training from converging to an effective action policy. 
This indicates the disruptive impact of high-frequency random SVO changes on agent behavior and decision-making.
In contrast, the other two methods of random SVO assignment allowed agents to learn effective action policies, although performance metrics slightly lower than those achieved by the SVO policy specifically learned by SYLPH in random and room-like maps. 
This slight difference in performance might be attributed to the limited number of agents (only 8) involved in the training, which constrains the extent and variety of potential social dilemmas and conflicts within these less complex environments. 
Consequently, the added value of adaptive social behaviors in these settings is somewhat restricted.
However, a different trend was observed in the training outcomes on maze maps, which are characterized by high obstacle density and highly structured obstacle distribution. 
Here, even with just eight agents, the structured environment teems with numerous social dilemmas and conflicts. 
Under these conditions, the SVO policy implemented by SYLPH demonstrated clear advantages, leading to significant performance improvements. 

The experiments validate the superiority of the SVO policy trained under the SYLPH framework over randomly assigned SVO polices.


\section{Engineering Deployment}
\label{Eng_Dep}

\subsection{Setup}

\begin{figure}[h]
	\centering
	\vspace{-0.3cm}
	\subfigtopskip=1pt 
	\subfigbottomskip=2pt 
	\subfigcapskip=-5pt
	\subfigure[random map]{
		\label{random_map_realExp}
		\includegraphics[width=0.4\linewidth]{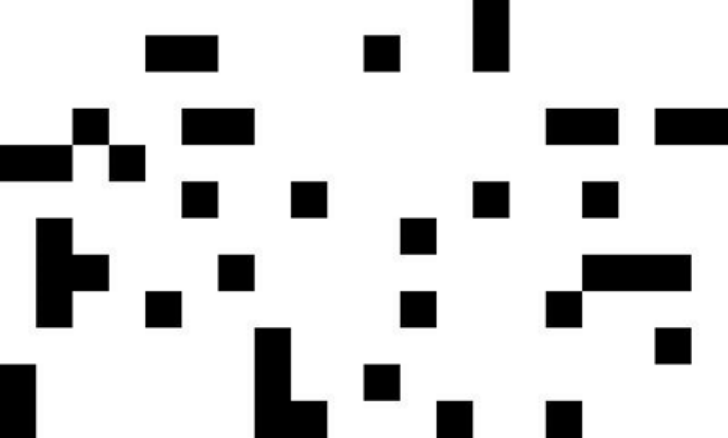}}
	\quad 
	\subfigure[room-like map]{
		\label{room_map_realExp}
		\includegraphics[width=0.4\linewidth]{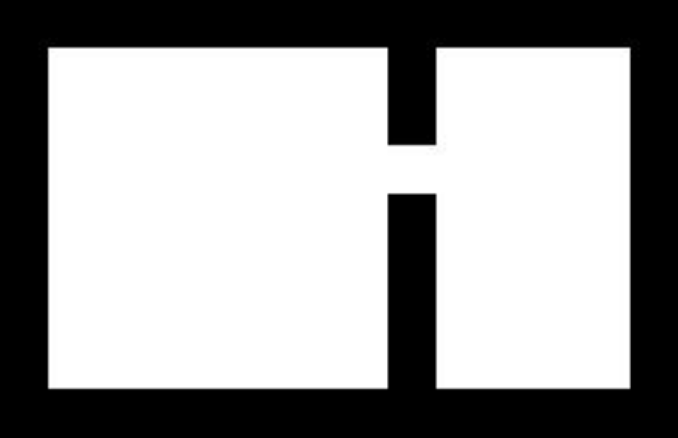}}
        \quad
	\subfigure[maze map]{
		\label{maze_map_realExp}
		\includegraphics[width=0.4\linewidth]{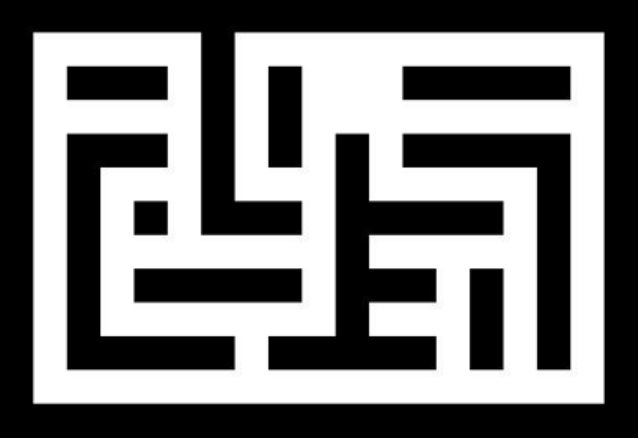}}
	\caption{Maps for real robot experiments.}
	\label{realExp_Maps}
\end{figure}
Fig.~\ref{realExp_Maps} illustrates the random map, room-like map, and maze map used in our experiment. 
When mapped to the real world, each cell has a side length of $0.3m$, which is slightly larger than the size of the agent to ensure that the agent occupies only one cell on the map.
We utilized 8 robots equipped with Mecanum wheels, each robot measuring approximately $0.23m\times 0.2m$.
We used the \textit{OptiTrack Motion Capture System} to get the accurate positions of these 8 robots. 
The configuration of the agents' starting and goal positions was randomly generated. 
In the experiment, the robots were aware of the virtual positions of obstacles and were programmed to avoid these areas. 
However, the real environment did not contain physical obstacles which can prevent interference with the line of sight of the \textit{OptiTrack} motion capture system.

\subsection{Action dependency graph}

SYLPH generates paths assuming each agent operates perfectly at every step in a discrete map. 
However, due to the imperfect nature of robots and the continuous environment, directly executing these planned paths in the real world is impractical. 
For example, a planner might instruct agent $\mathtt{A}$ to move to agent $\mathtt{B}$'s current position while agent $\mathtt{B}$ moves to a new cell. 
Executing this plan directly could result in collisions due to localization errors, delays in motor control, or differences in velocities.

To prevent such issues and ensure the feasibility of our joint set of actions, we adopt the method proposed by \cite{honig2019persistent} by constructing an \textit{Action Dependency Graph} (ADG). 
The ADG establishes a precedence order for agents occupying a cell, meaning that faster-moving agents will wait for others if the planned path requires them to occupy the cell afterward. 
This mechanism ensures that execution errors do not propagate and disrupt the overall plan.
In the above example, the ADG ensures that agent $\mathtt{A}$ will wait for $\mathtt{B}$ to vacate its current cell before moving in, thus preserving the integrity of the planned path.
Such a mechanism ensures that the planned path is executed securely between agents, though it may introduce slight delays.

\subsection{Execution}

To implement the ADG, each robot's action must be converted into a task. We define a \textit{Task} object with the following attributes:
\begin{verbatim}
object Task {
    taskID;       // Unique identifier for the task
    robotID;      // ID of the robot assigned to the task
    action;       // Action to be performed
    startPos;     // Initial position of the robot
    endPos;       // Position after action completion
    time;         // Scheduled time for the action
    dependencies; // All Tasks that need to be completed before this
    status;       // Current status: staged, enqueued, or completed
};
\end{verbatim}
During execution, we iterate through all agents, translating their actions into \textit{Task} objects and constructing the ADG from these tasks. Each task can have one of three statuses: \textit{STAGED}, \textit{ENQUEUED}, or \textit{DONE}.
Initially, tasks are set to \textit{STAGED}. When all \textit{dependencies} of the task are completed, the status changes to \textit{ENQUEUED}, meaning readiness for execution. 
The task status updates to \textit{DONE} once the agent reaches the specified \textit{endPos}.

At the ROS execution level, a \textit{central node} is responsible for generating and maintaining the ADG, as well as distributing tasks to robots when they are ready to be enqueued. 
Each robot is equipped with a \textit{robot node}, which handles receiving tasks from the central node, extracting goals from these tasks, and using a PID controller to reach those goals.
Once a robot reaches its goal, it communicates this achievement to the central node, marking the task as \textit{DONE}. 
The central node then uses the ADG to enqueue additional tasks for the robots.




\clearpage
\bibliographystyle{elsarticle-num-names}
\bibliography{references}




\end{document}